\LetLtxMacro{\oldtextsc}{\textsc}
\renewcommand{\textsc}[1]{\oldtextsc{\scalefont{1.10}#1}}
\definecolor{pink3}{cmyk}{0, 0.7808, 0.4429, 0.1412}
\crefname{section}{\S}{\S\S}
\Crefname{section}{\S}{\S\S}
\definecolor{shadecolor}{gray}{0.9}
\definecolor{mylightgray}{gray}{0.94}
\newcommand*{\addFileDependency}[1]{\typeout{(#1)}
	\@addtofilelist{#1}
	\IfFileExists{#1}{}{\typeout{No file #1.}}
}
\pgfplotsset{compat=1.6}
\tikzstyle{every picture}+=[font=\sffamily]
\tikzstyle{optimized} = [circle,fill=white,draw=black, dashed,inner sep=1pt, minimum size=20pt, font=\fontsize{10}{10}\selectfont, node distance=1]
\pgfplotsset{
	tick label style = {font=\sffamily},
	every axis label/.append style={font=\sffamily},
	typeset ticklabels with strut,
}
\pgfplotsset{every axis/.append style={
			every x tick label/.append style={font=\fontsize{6pt}{6pt}\sffamily, yshift=.5ex,},
			every y tick label/.append style={font=\fontsize{6pt}{6pt}\sffamily, xshift=.5ex},
			every y label/.append style={xshift=10ex, font=\sffamily},
			every x label/.append style={yshift=3ex, font=\sffamily},
			every title/.append style={font=\sffamily}
		},
}
\pgfplotsset{
	xticklabel={$\mathsf{\pgfmathprintnumber{\tick}}$},
	yticklabel={$\mathsf{\pgfmathprintnumber{\tick}}$},
}
\pgfplotsset{every axis title/.append style={yshift=-1ex}}
\newlength\figureheight
\newlength\figurewidth
\newcommand*\circled[1]{\tikz[baseline=(char.base)]{
		\node[shape=circle,draw,inner sep=2pt] (char) {\tiny #1};}}
\renewcommand{\todo}[2][]{\tikzexternaldisable\@todo[#1]{#2}\tikzexternalenable}
\declaretheorem[name=Proposition]{proposition}
\declaretheorem[name=Definition]{definition}
\newacronym{MAP}{map}{maximum-a-posteriori}
\newacronym{MLE}{mle}{maximum likelihood estimation}
\newacronym{MNLL}{mnll}{mean negative loglikelihood}
\newacronym{NLL}{nll}{negative loglikelihood}
\newacronym{LL}{ll}{log-likelihood}
\newacronym{RMSE}{rmse}{root mean square error}
\newacronym{ECE}{ece}{expected calibration error}
\newacronym{SNR}{snr}{signal-to-noise ratio}
\newacronym{FID}{fid}{Fr\'echet Inception Distance}
\newacronym{BPD}{bpd}{bit per dimension}
\newacronym{NFE}{nfe}{neural function evaluations}
\newacronym{AE}{ae}{autoencoder}
\newacronym{WAE}{wae}{Wasserstein Autoencoder}
\newacronym{VAE}{vae}{Variational Autoencoder}
\newacronym{BAE}{bae}{Bayesian autoencoder}
\newacronym{CDF}{cdf}{cumulative density function}
\newacronym{GAN}{gan}{Generative Adversarial Network}
\newacronym{DPGMM}{dpgmm}{Dirichlet process Gaussian mixture model}
\newacronym{MC}{mc}{Monte Carlo}
\newacronym{SDE}{sde}{Stochastic Differential Equation}
\newacronym{CNF}{cnf}{Continuous Normalizing Flow}
\newacronym{ODE}{ode}{Ordinary Differential Equation}
\newacronym{MCMC}{mcmc}{Markov chain Monte Carlo}
\newacronym{HMC}{hmc}{Hamiltonian Monte Carlo}
\newacronym{MH}{mh}{Metropolis-Hastings}
\newacronym{NUTS}{nuts}{no-u-turn sampler}
\newacronym{SGHMC}{sghmc}{stochastic gradient Hamiltonian Monte Carlo}
\newacronym{DGP}{dgp}{deep Gaussian process} \newacronym{GPLVM}{gplvm}{Gaussian process latent variable model}
\newacronym{DPMM}{dpmm}{Dirichlet Process Mixture Model}
\newacronym{VFE}{vfe}{variational free energy}
\newacronym[firstplural=Gaussian Processes]{GP}{gp}{Gaussian Process}
\newacronym{VI}{vi}{variational inference}
\newacronym{ELBO}{elbo}{evidence lower bound}
\newacronym{NELBO}{nelbo}{negative evidence lower bound}
\newacronym{ELL}{ell}{expected log likelihood}
\newacronym{KL}{kl}{Kullback-Leibler}
\newacronym{AUC}{auc}{area under the curve}
\newacronym[firstplural=Bayesian neural networks]{BNN}{bnn}{Bayesian neural network}
\newacronym[firstplural=deep neural networks]{DNN}{dnn}{deep neural network}
\newacronym[]{CNN}{cnn}{convolutional neural network}
\newacronym{MLP}{mlp}{multilayer perceptron}
\newacronym{NN}{nn}{neural network}
\newacronym{RELU}{ReLU}{rectified linear unit}
\newacronym{NF}{nf}{normalizing flow}
\newacronym{RBF}{rbf}{radial basis function}
\newacronym{ARD}{ard}{automatic relevance determination}
\newacronym{RKHS}{rkhs}{reproducing kernel Hilbert space}
\newacronym{OT}{ot}{optimal transport}
\newacronym{WD}{wd}{Wasserstein distance}
\newacronym{SWD}{swd}{sliced-Wasserstein distance}
\newacronym{DSWD}{dswd}{distributional sliced-Wasserstein distance}
\newcommand{\name}[1]{{\textsc{#1}}\xspace}
\newcommand{\mnist}{\name{mnist}}
\newcommand{\cifar}{\name{cifar10}}
\newcommand{\score}{\emph{score}\xspace}
\newcommand{\mathbold}[1]{{\boldsymbol{{#1}}}}
\newcommand{\g}{\,|\,}
\newcommand{\nestedmathbold}[1]{{\mathbold{#1}}}
\newcommand{\mba}{\nestedmathbold{a}}
\newcommand{\mbf}{\nestedmathbold{f}}
\newcommand{\mbm}{\nestedmathbold{m}}
\newcommand{\mbs}{\nestedmathbold{s}}
\newcommand{\mbw}{\nestedmathbold{w}}
\newcommand{\mbx}{\nestedmathbold{x}}
\newcommand{\mbI}{\nestedmathbold{I}}
\newcommand{\mbmu}{\nestedmathbold{\mu}}
\newcommand{\mbphi}{\nestedmathbold{\phi}}
\newcommand{\mbtheta}{\nestedmathbold{\theta}}
\newcommand{\mbzero}{\nestedmathbold{0}}
\newcommand{\Lelbo}{\cL_{\textsc{elbo}}}
\DeclareRobustCommand{\KL}[2]{\ensuremath{\textsc{kl}\left[#1\;\|\;#2\right]}}
\DeclarePairedDelimiterX{\infdivx}[2]{[}{]}{#1\;\delimsize\|\;#2}
\DeclareRobustCommand{\Gauss}[1]{\ensuremath{\mathcal{N}_{#1}}}
\DeclareMathOperator*{\argmax}{arg\,max}
\DeclareMathOperator*{\argmin}{arg\,min}
\newcommand{\cL}{\mathcal{L}}
\newcommand{\cN}{\mathcal{N}}
\newcommand{\cG}{\mathcal{G}}
\newcommand{\E}{\mathbb{E}}
\newcommand{\defeq}{\stackrel{\text{\tiny def}}{=}}
\newcommand{\sub}[1]{{\texttt{\textit{\scriptsize {#1}}}}}
\newcommand{\pdata}{{p_\sub{\!d\!a\!t\!a}}}
\newcommand{\ps}{{p_\sub{\!n\!o\!i\!s\!e}}}
\title{How Much is Enough? A Study on Diffusion Times in Score-based Generative Models
}
\def\eurecom{\footnotesize{EURECOM}\\\footnotesize{(France)}}
\def\huawei{\footnotesize{Huawei Technologies}\\\footnotesize{(France)}}
\author{Giulio Franzese \\\eurecom \And
	Simone Rossi \\\eurecom \And
	Lixuan Yang \\\huawei \And
	Alessandro Finamore \\\huawei \And
	Dario Rossi \\\huawei \And
	Maurizio Filippone \\\eurecom \And
	Pietro Michiardi \\\eurecom
}
\begin{document}

\maketitle

\begin{abstract}
	Score-based diffusion models are a class of generative models whose dynamics is described by stochastic differential equations that map noise into data. 
While recent works have started to lay down a theoretical foundation for these models, an analytical understanding of the role of the diffusion time $T$ is still lacking. 
Current best practice advocates for a large $T$ to ensure that the forward dynamics brings the diffusion sufficiently close to a known and simple noise distribution; however, a smaller value of $T$ should be preferred for a better approximation of the score-matching objective and higher computational efficiency.
Starting from a variational interpretation of diffusion models, in this work we quantify this trade-off, and suggest a new method to improve quality and efficiency of both training and sampling, by adopting smaller diffusion times.
Indeed, we show how an auxiliary model can be used to bridge the gap between the ideal and the simulated forward dynamics, followed by a standard reverse diffusion process.
Empirical results support our analysis; for image data, our method is competitive w.r.t. the state-of-the-art, according to standard sample quality metrics and log-likelihood.

\end{abstract}

\section{Introduction}\label{sec:introduction}
Diffusion-based generative models \citep{sohl2015deep, song2019generative, song2020score, vahdat2021score, kingma2021variational,ho2020denoising,song2021denoising} have recently gained popularity due to their ability to synthesize high-quality audio \citep{kong2021diffwave, lee2022priorgrad}, image \cite{dhariwal2021diffusion, nichol2021icml} and other data modalities \cite{Yusuke2021}, outperforming known methods based on \glspl{GAN} \citep{Goodfellow2014}, \glspl{NF} \citep{Kingma2016} or \glspl{VAE} and \glspl{BAE} \citep{Kingma14,Tran2021}.

Diffusion models learn to generate samples from an unknown density $\pdata$ by reversing a \textit{diffusion process} which transforms the distribution of interest into noise.
The forward dynamics injects noise into the data following a diffusion process that can be described by a \gls{SDE} of the form,
\begin{equation}
	\label{eq:diffusion_sde}
	\dd\mbx_t= \mbf(\mbx_t, t)\dd t+g(t)\dd\mbw_t\quad \text{with} \quad \mbx_0\sim \pdata\,,
\end{equation}
where $\mbx_t$ is a random variable at time $t$, $\mbf(\cdot, t)$ is the \textit{drift term}, $g(\cdot)$ is the \textit{diffusion term} and $\mbw_t$ is a \textit{Wiener process} (or Brownian motion). We will also consider a special class of linear \glspl{SDE}, for which the drift term is decomposed as $f(\mbx_t, t) = \alpha(t)\mbx_t$
and the diffusion term is independent of $\mbx_t$.
This class of parameterizations of \glspl{SDE} is known as \emph{affine} and it admits analytic solutions.
We denote the time-varying probability density by $p(\mbx,t)$, where by definition $p(\mbx,0)=\pdata(\mbx)$, and the conditional on the initial condition $\mbx_0$ by $p(\mbx,t\g\mbx_0)$.
The forward \gls{SDE} is usually considered for a sufficiently long \textit{diffusion time} $T$, leading to the density $p(\mbx, T)$. In principle, when $T\rightarrow\infty$, $p(\mbx, T)$ converges to Gaussian noise, regardless of initial conditions.

For generative modeling purposes, we are interested in the inverse dynamics of such process, i.e., transforming samples of the noisy distribution $p(\mbx,T)$ into $\pdata(\mbx)$.
Formally, such dynamics can be obtained by considering the solutions of the inverse diffusion process \citep{anderson},
\begin{align}\label{revsde}
	\dd\mbx_t=\left[-\mbf(\mbx_t,t')+g^2(t')\gradient\log p(\mbx_t,t')\right]\dd t +g(t')\dd\mbw_t\,, \end{align}
where $t'\defeq T-t$, with the inverse dynamics involving a new Wiener process.
Given $p(\mbx, T)$ as the initial condition, the solution of \cref{revsde}
after a \textit{reverse diffusion time} $T$, will be distributed as $\pdata(\mbx)$.
The simulation of the backward process is referred to as \textit{sampling} and,
differently from the forward process, this process is not \emph{affine} and a closed form solution is out of reach.
\paragraph{Practical considerations on diffusion time.}
In practice, diffusion models are challenging to work with \citep{song2020score}.
Indeed, a direct access to the true \score function $\gradient\log p(\mbx_t,t)$ required in the dynamics of the reverse diffusion is unavailable.
This can be solved by approximating it with a parametric function $\mbs_{\mbtheta}(\mbx_t,t)$, e.g., a neural network, which is trained using the following loss function, \begin{align}\label{eq:score_matching}
	\cL(\mbtheta) = T~\E_{p(t)} \E_{\sim\eqref{eq:diffusion_sde}} \lambda(t)\norm{\mbs_{\mbtheta}(\mbx_t,t) - \gradient\log p(\mbx_t, t \g \mbx_0)}^2\,,
\end{align}
where the notation $\E_{\sim\eqref{eq:diffusion_sde}}$ means that the expectation is taken with respect to the random process $\mbx_t$ in \cref{eq:diffusion_sde}, $p(t)=\mathcal{U}(0, T)$ and $\lambda(t)$ is a positive weighting factor.
Due to the affine property of the drift, the term $p(\mbx_t, t\g \mbx_0)$ is analytically known and normally distributed for all $t$ (expression available in \cref{tab:diff_types}, and in \citep{Sarkka2019}).
Note also that we will refer to $\lambda$ as the \emph{likelihood reweighting} factor when $\lambda(t) = g(t)^2$ \citep{song2021maximum}.
Intuitively, the estimation of the \score is akin to a denoising objective, which operates in a challenging regime. Later we will quantify precisely the difficulty of learning the \score, as a function of increasing diffusion times.

Moreover, while the forward and reverse diffusion processes are valid for all $T$,
the noise distribution $p(\mbx,T)$ is analytically known only when the diffusion time is $T\rightarrow\infty$.
To overcome this problem, the common solution is to replace $p(\mbx,T)$ with a simple distribution $\ps(\mbx)$ which, for the classes of \glspl{SDE} we consider in this work, is a Gaussian distribution.
Indeed, in the infinite diffusion time regime, it is possible to derive $p(\mbx,T\rightarrow\infty) = \ps(\mbx)$ analytically.

\begin{minipage}{.57\textwidth}
	In the literature, the discrepancy between $p(\mbx,T)$ and $\ps(\mbx)$ has been neglected, under the informal assumption of a sufficiently large diffusion time.
	Unfortunately, while this approximation seems a valid approach to simulate and generate samples, the reverse diffusion process starts from a different initial condition $q(\mbx, 0)$ and, as a consequence, it will converge to a solution $q(\mbx, T)$ that is different from the true $\pdata(\mbx)$.
	Later, we will expand on the error introduced by this approximation, but for illustration purposes \cref{fig:toy_difftime_vs_likelihood} shows quantitatively this behavior for a simple 1D toy example $\pdata(\mbx)=\pi\cN(1, 0.1^2) + (1-\pi)\cN(3, 0.5^2)$, with $\pi=0.3$: when $T$ is small, the distribution $\ps(\mbx)$ is very different from $p(\mbx, T)$ and samples from $q(\mbx, T)$ exhibit very low likelihood of being generated from $\pdata(\mbx)$.
	\end{minipage}\hfill
\begin{minipage}{.4\textwidth}
	\centering
	\includegraphics[width=\textwidth]{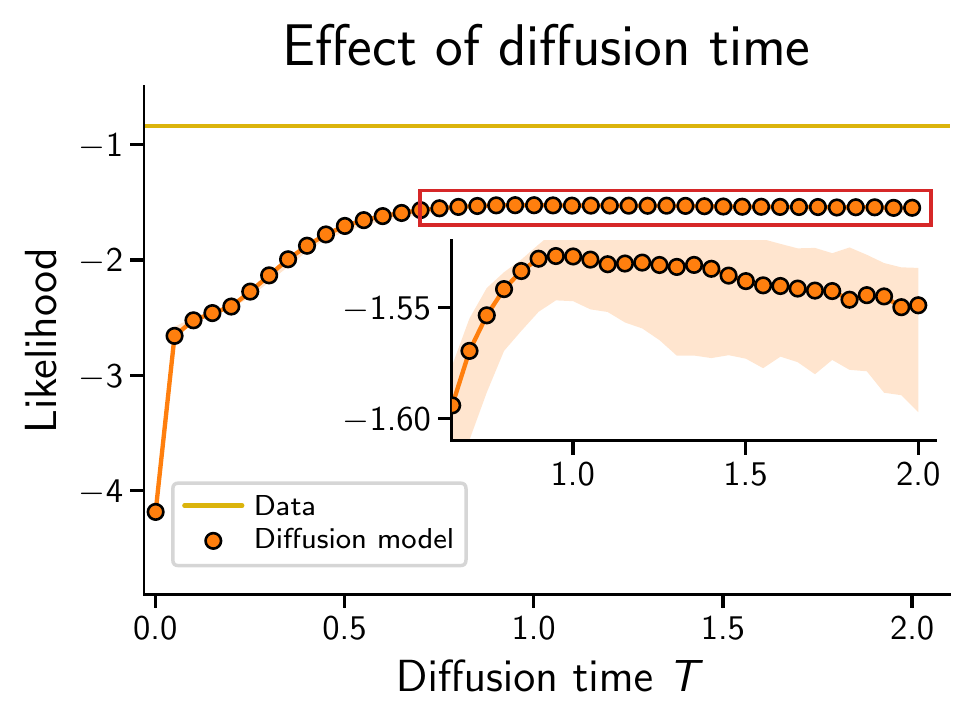}
	\captionof{figure}{Effect of $T$ on a toy model: low diffusion times are detrimental for sample quality (likelihood of 1024 samples as median and 95 quantile, on 8 random seeds).
		}
	\label{fig:toy_difftime_vs_likelihood}
	\end{minipage}

Crucially, \cref{fig:toy_difftime_vs_likelihood} (zoomed region) illustrates an unknown behavior of diffusion models, which we unveil in our analysis. In practical settings, there exists an optimal diffusion time that strikes right the balance between efficient \score estimation, and sampling quality.

\paragraph{Contributions.}
An appropriate choice of the diffusion time $T$ is a key factor that impacts training convergence, sampling time and quality.
On the one hand, the approximation error introduced by considering initial conditions for the reverse diffusion process drawn from a simple distribution $\ps(\mbx) \neq p(\mbx,T)$ increases when $T$ is small.
This is why the current best practice is to choose a sufficiently long diffusion time.
On the other hand, training convergence of the \score model $\mbs_{\mbtheta}(\mbx_t,t)$ becomes more challenging to achieve with a large $T$, which also imposes extremely high computational costs both for training and for sampling.
This would suggest to choose a smaller diffusion time.
Given the importance of this problem, in this work we set off to study---for the first time---the existence of suitable operating regimes to strike the right balance between computational efficiency and model quality. The main contributions of this work are the following.

\noindent \textbf{Contribution 1:} In \cref{sec:elbo} we provide a new characterization of score-based diffusion models, which allows us to obtain a formal understanding of the impact of the diffusion time $T$.
We do so by introducing a novel decomposition of the \gls{ELBO}, which emphasizes the roles of (i) the discrepancy between the ``ending'' distribution of the diffusion and the ``starting'' distribution of the reverse diffusion processes, and (ii) of the \score matching objective.
This allows us to claim the existence of an optimal diffusion time, and it provides, for the first time, a formal assessment of the current best practice for selecting $T$.

\noindent \textbf{Contribution 2:} In \cref{sec:method} we propose a novel method to improve \emph{both} training and sampling efficiency of diffusion-based models, while maintaining high sample quality.
Our method introduces an auxiliary distribution, allowing us to transform the simple ``starting'' distribution of the reverse process used in the literature so as to minimize the discrepancy to the ``ending'' distribution of the forward process.
Then, a standard reverse diffusion can be used to closely match the data distribution.
Intuitively, our method allows to build ``bridges'' across multiple distributions, and to set $T$ toward the advantageous regime of small diffusion times.

In addition to our methodological contributions, in \cref{sec:experiments}, we provide experimental evidence of the benefits of our method, in terms of sample quality and log likelihood.
Finally, we conclude in \cref{sec:conclusion}.

\paragraph{Related Work.}
A concurrent work \cite{zheng2022truncated} presents an empirical study of a truncated diffusion process, but lacks a rigorous analysis, and a clear justification for the proposed approach. Attempts \cite{lee2022priorgrad} to optimize $\ps$, or the proposal to do so \cite{austin2021structured} have been studied in different contexts. Related work focus primarily on improving sampling efficiency, using a wide array of techniques. Sample generation times can be drastically reduced considering adaptive step-size integrators \cite{jolicoeur2021gotta}. Other popular choices are based on merging multiple steps of a pretrained model through distillation techniques \cite{salimans2022progressive} or by taking larger sampling steps with GANs \cite{xiao2022DDGAN}. 
Approaches closer to ours \textit{modify} the \gls{SDE}, or the discrete time processes, to obtain inference efficiency gains. In particular \cite{song2021denoising} considers implicit non-Markovian diffusion processes, while \cite{watson2021} changes the diffusion processes by optimal scheduling selection and \cite{dockhorn2022score} considers overdamped \gls{SDE}s. Finally, hybrid techniques combining VAEs and diffusion models \citep{vahdat2021score}, have positive effects on training and sampling times.

\section{A new ELBO decomposition and a tradeoff on diffusion time}\label{sec:elbo}

\glsreset{ELBO}
The dynamics of a diffusion model can be studied through the lens of variational inference, which allows us to bound the (log-)likelihood using an \gls{ELBO} \citep{huang2021variational}.
Our interpretation emphasizes the two main factors affecting the quality of sample generation: an imperfect \score, and a mismatch, measured in terms of the \gls{KL} divergence, between the noise distribution $p(\mbx,T)$ of the forward process and the distribution $\ps$ used to initialize the backward process.

\subsection{The ELBO decomposition}

By manipulating the $\Lelbo$ derived in \citep[Eq. (25)]{huang2021variational}, we can write
\begin{flalign}\label{elbo0}
	&\mathbb{E}_{\pdata(\mbx)}\log q(\mbx,T)\geq \Lelbo(\mbs_\mbtheta, T) = \E_{\sim\eqref{eq:diffusion_sde}}\log \ps(\mbx_T)-I(\mbs_{\mbtheta},T)+R(T),
\end{flalign}
where 
$R(T)=\frac{1}{2}\int\limits_{t=0}^{T}\E_{\sim\eqref{eq:diffusion_sde}}\left[g^2(t)\norm{\gradient\log p(\mbx_t,t\g\mbx_0)}^2-2\mbf^\top(\mbx_t,t)\gradient\log p(\mbx_t,t\g\mbx_0)\right]\dd t$,
and
$I(\mbs_{\mbtheta},T)= \frac{1}{2}\int\limits_{t=0}^{T}g^2(t)\E_{\sim\eqref{eq:diffusion_sde}}
	\left[  \norm{\mbs_{\mbtheta}(\mbx_t,t)-\gradient\log p(\mbx_t,t\g\mbx_0)}^2\right]\dd t$. 
Note that $R(T)$ depends neither on $\mbs_{\mbtheta}$ nor on $\ps$, while $I(\mbs_{\mbtheta},T)$, or an equivalent reparameterization \citep[\cref{eq:diffusion_sde}]{huang2021variational,song2021maximum}, is used to learn the approximated \score, by optimization of the parameters $\mbtheta$.
It is then possible to show that
\begin{equation}\label{eq:I_ineq}
	I(\mbs_{\mbtheta},T)\geq \underbrace{I(\gradient\log p,T)}_{\defeq K(T)}=\frac{1}{2}\int\limits_{t=0}^{T}g^2(t)\mathbb{E}_{\sim\eqref{eq:diffusion_sde}}\left[
		\norm{\gradient\log p(\mbx_t,t)-\gradient\log p(\mbx_t,t\g\mbx_0)}
		\right]^2\dd t.
\end{equation}
Consequently, we can rewrite $I(\mbs_{\mbtheta},T))=K(T)+\cG(\mbs_{\mbtheta},T)$ (see Appendix for details), where $\cG(\mbs_{\mbtheta},T)$ is a positive term that we call the \textit{gap} term, accounting for the practical case of  an imperfect \score, i.e. $\mbs_{\mbtheta}(\mbx_t,t) \neq \gradient\log p(\mbx_t,t)$.
It also holds that
\begin{align}
	\E_{\sim\eqref{eq:diffusion_sde}}\log \ps(\mbx_T) & =  \int \left[ \log \ps(\mbx) - \log p(\mbx, T) + \log p(\mbx, T) \right] p(\mbx, T) \dd\mbx = \nonumber \\
	                                                  & =  \E_{\sim\eqref{eq:diffusion_sde}}\log p(\mbx_T, T) -\KL{\log p(\mbx, T)}{\ps(\mbx)}.
\end{align}
Therefore, we can rewrite the \gls{ELBO} in \cref{elbo0} as
\begin{equation}
	\mathbb{E}_{\pdata(\mbx)}\log q(\mbx,T) \geq
	- \KL{p(\mbx,T)}{\ps(\mbx)} + \mathbb{E}_{\sim\eqref{eq:diffusion_sde}}\log p(\mbx_T,T) - K(T) + R(T) - \cG(\mbs_{\mbtheta},T).
\end{equation}

Before concluding our derivation it is necessary to introduce an important observation (formal proof in Appendix).

\begin{restatable}{proposition}{propelbo}\label{prop_elbo}
	Given the stochastic dynamics defined in \cref{eq:diffusion_sde}, it holds that
	\begin{align}
		\mathbb{E}_{\sim\eqref{eq:diffusion_sde}}\log p(\mbx_T,T) - K(T) + R(T) = \mathbb{E}_{\pdata(\mbx)}\log \pdata(\mbx).
	\end{align}
\end{restatable}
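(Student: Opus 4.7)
The target identity is
\[
\mathbb{E}_{\sim\eqref{eq:diffusion_sde}}\log p(\mbx_T,T) - K(T) + R(T) = \mathbb{E}_{\pdata(\mbx)}\log \pdata(\mbx),
\]
which, after rearrangement, is equivalent to
\[
\mathbb{E}\log p(\mbx_T,T) - \mathbb{E}\log p(\mbx_0,0) = K(T) - R(T).
\]
The plan is to evaluate the left-hand side via a time derivative of the marginal entropy (de Bruijn--type identity) along the forward SDE, simplify $K(T)-R(T)$ so that it matches exactly the resulting integral, and conclude.

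\textbf{Step 1: simplify $K(T)$ via the Vincent/denoising-score identity.} Conditioning on $\mbx_0$, the tower rule gives $\mathbb{E}_{\mbx_0\mid\mbx_t}[\gradient\log p(\mbx_t,t\g\mbx_0)]=\gradient\log p(\mbx_t,t)$, hence $\mathbb{E}[\gradient\log p(\mbx_t,t)^{\!\top}\gradient\log p(\mbx_t,t\g\mbx_0)]=\mathbb{E}\|\gradient\log p(\mbx_t,t)\|^2$. Expanding the square in the definition of $K(T)$ then yields
\[
K(T)=\tfrac12\!\int_0^T\! g^2(t)\,\bigl[\mathbb{E}\|\gradient\log p(\mbx_t,t\g\mbx_0)\|^2-\mathbb{E}\|\gradient\log p(\mbx_t,t)\|^2\bigr]\,\dd t.
\]

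\textbf{Step 2: simplify the drift term in $R(T)$.} Using $p(\mbx,t)=\int p(\mbx_0)p(\mbx,t\g\mbx_0)\,\dd\mbx_0$ and differentiating under the integral, $\mathbb{E}[\mbf^{\!\top}\gradient\log p(\mbx_t,t\g\mbx_0)]=\int\mbf^{\!\top}\gradient p(\mbx,t)\,\dd\mbx=\mathbb{E}[\mbf^{\!\top}\gradient\log p(\mbx_t,t)]$. Combined with Step 1,
\[
K(T)-R(T)=\int_0^T\!\mathbb{E}[\mbf^{\!\top}\gradient\log p(\mbx_t,t)]\,\dd t-\tfrac12\!\int_0^T\! g^2(t)\,\mathbb{E}\|\gradient\log p(\mbx_t,t)\|^2\,\dd t.
\]

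\textbf{Step 3: compute $\tfrac{\dd}{\dd t}\mathbb{E}\log p(\mbx_t,t)$ via Fokker--Planck.} Writing $\mathbb{E}\log p(\mbx_t,t)=\int p(\mbx,t)\log p(\mbx,t)\,\dd\mbx$, differentiating in $t$, and substituting the Fokker--Planck equation $\partial_t p=-\gradient\!\cdot\!(\mbf p)+\tfrac12 g^2\Delta p$, the $\int\partial_t p\,\dd\mbx$ contribution vanishes and we are left with $\int(\log p)\partial_t p\,\dd\mbx$. Two integrations by parts (assuming the standard decay of $p(\cdot,t)$ at infinity, so boundary terms vanish) give
\[
-\!\int(\log p)\,\gradient\!\cdot\!(\mbf p)\,\dd\mbx=\mathbb{E}[\mbf^{\!\top}\gradient\log p],\qquad \tfrac{g^2}{2}\!\int(\log p)\Delta p\,\dd\mbx=-\tfrac{g^2}{2}\mathbb{E}\|\gradient\log p\|^2,
\]
so that $\tfrac{\dd}{\dd t}\mathbb{E}\log p(\mbx_t,t)=\mathbb{E}[\mbf^{\!\top}\gradient\log p(\mbx_t,t)]-\tfrac{g^2(t)}{2}\mathbb{E}\|\gradient\log p(\mbx_t,t)\|^2$.

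\textbf{Step 4: conclude.} Integrating the identity from Step 3 over $[0,T]$ and comparing with the expression from Step 2 immediately gives $\mathbb{E}\log p(\mbx_T,T)-\mathbb{E}_{\pdata}\log\pdata=K(T)-R(T)$, which is the desired equality. The only technical point is justifying the two integrations by parts in Step 3; this is the main (mild) obstacle and is standard under the regularity assumptions on $p$ that are already implicit in the definitions of $R(T)$ and $K(T)$.
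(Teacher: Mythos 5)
Your proof is correct, but it takes a genuinely different route from the paper's. The paper derives the identity from the variational equality of \citet{huang2021variational}: it writes the reverse \gls{SDE} driven by the exact score, argues via a Taylor expansion of the two Fokker--Planck equations that $q(\mbx,t)=p(\mbx,T-t)$ once the reverse process is initialized at $p(\mbx,T)$, invokes the \gls{ELBO}-with-equality theorem to obtain a path-integral representation of $\log\pdata(\mbx)$, and then expands that representation---using the same conditional-expectation (Vincent) identity you use in Step~1---until $K(T)$ and $R(T)$ appear. You bypass the reverse process entirely: after reducing $K(T)-R(T)$ to $\int_0^T\bigl(\mathbb{E}[\mbf^\top\gradient\log p]-\tfrac{g^2}{2}\mathbb{E}\norm{\gradient\log p}^2\bigr)\dd t$ via the denoising-score identity, you recognize the integrand as $\tfrac{\dd}{\dd t}\,\mathbb{E}\log p(\mbx_t,t)$ through a de Bruijn-type computation with the forward Fokker--Planck equation and integrate. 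Your argument is more elementary and self-contained (it needs only the forward dynamics, integration by parts, and decay at infinity to kill boundary terms---the same implicit regularity the paper also needs), whereas the paper's argument has the advantage of sitting inside the variational framework it uses throughout. One small remark: the paper's displayed definitions of $K(T)$ and $R(T)$ place the square outside the expectation of the norm; your reading of them as expected squared norms is the intended one and is what the paper's own proof manipulates.
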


Finally, we can now bound the value of $\mathbb{E}_{\pdata(\mbx)}\log q(\mbx,T)$ as
\begin{equation}\label{elbo1}
	\mathbb{E}_{\pdata(\mbx)}\log q(\mbx,T) \geq
	\underbrace{\mathbb{E}_{\pdata(\mbx)}\log \pdata(\mbx) - \cG(\mbs_{\mbtheta},T) - \KL{p(\mbx,T)}{\ps(\mbx)}}_{\Lelbo(\mbs_\mbtheta, T)}.
\end{equation}
\cref{elbo1} clearly emphasizes the roles of an approximate score function, through the gap term $\cG(\cdot)$, and the discrepancy between the noise distribution of the forward process, and the initial distribution of the reverse process, through the \gls{KL} term.
In the ideal case of perfect \score matching, the \gls{ELBO} in \cref{elbo1} is attained with equality.
If, in addition, the initial conditions for the reverse process are ideal, i.e. $q(\mbx,0)=p(\mbx,T)$, then the results in \cite{anderson} allow us to claim that $q(\mbx,T)=\pdata(\mbx)$.

Next, we show the existence of a tradeoff: the \gls{KL} decreases with $T$, while the gap increases with $T$.

\subsection{The tradeoff on diffusion time}\label{subsec:tradeoff}

We begin by showing that the \gls{KL} term in \cref{elbo1} decreases with the diffusion time $T$, which induces to select large $T$ to maximize the \gls{ELBO}.
We consider the two main classes of \glspl{SDE} for the forward diffusion process defined in \cref{eq:diffusion_sde}: \glspl{SDE} whose steady state distribution is the standard multivariate Gaussian, referred to as \emph{Variance Preserving} (VP), and \glspl{SDE} without a stationary distribution, referred to as \emph{Variance Exploding} (VE), which we summarize in \cref{tab:diff_types}.
The standard approach to generate new samples relies on the backward process defined in \cref{revsde}, and consists in setting $\ps$ in agreement with the form of the forward process \gls{SDE}.
The following result bounds the discrepancy between the noise distribution $p(\mbx,T)$ and $\ps$.

\begin{table}[t]
	\centering\scriptsize\sffamily
	\setlength{\tabcolsep}{4pt}
	\caption{Two main families of diffusion processes, where  $\sigma^2(t) = \left(\frac{\sigma^2_{\sub{\tiny max}}}{\sigma^2_{\sub{\tiny min}}}\right)^t$ and $\beta(t) = \beta_0 + (\beta_1 - \beta_0)t$}
	\label{tab:diff_types}
	\begin{tabular}{rcccc}
		\toprule[1pt]
		                    & Diffusion process                                            & $p(\mbx_t, t\g\mbx_0) = \cN(\mbm,s\mbI)$                           & $\ps(\mbx)$                                        \\
		\midrule
		Variance Exploding  & $\alpha(t) = 0$, $g(t)=\sqrt{\frac{\dd\sigma^2(t)}{\dd t}} $ & $\mbm = \mbx_0$, $s=\sigma^2(t) - \sigma^2(0)$                     & $\mathcal{N}(\mbzero,\sigma^2(T)-\sigma^2(0)\mbI)$ \\

		Variance Preserving & $\alpha(t) = -\frac{1}{2}\beta(t)$, $g(t)=\sqrt{\beta(t)} $  & $\mbm= e^{-\frac{1}{2}\int_0^t\beta(\dd\tau)}\mbx_0$, $s = 1-e^{-\int_0^t\beta(\dd\tau)} $ & $\mathcal{N}(\mbzero,\mbI)$                        \\
		\bottomrule[1pt]
	\end{tabular}
\end{table}

\begin{restatable}{lemma}{propklvanish}\label{prop:kl_vanish}
	For the classes of \glspl{SDE} considered (\cref{tab:diff_types}), the discrepancy between $p(\mbx,T)$ and the $\ps(\mbx)$ can be bounded as follows.

	For Variance Preserving \glspl{SDE},
	it holds that: $\KL{p(\mbx,T)}{\ps(\mbx)} \leq C_1\exp(-\int_{0}^{T}\beta(t)dt)$.
	
	For Variance Exploding \glspl{SDE},
	it holds that: $\KL{p(\mbx,T)}{\ps(\mbx)} \leq C_2\frac{1}{\sigma^2(T)-\sigma^2(0)}$.
	\end{restatable}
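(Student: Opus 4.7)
The plan is to exploit two standard facts: the conditional $p(\mbx_t,t\g\mbx_0)$ is Gaussian with parameters given in \cref{tab:diff_types}, and the KL divergence is jointly convex, so that Jensen's inequality applied to the mixture representation $p(\mbx,T)=\int\pdata(\mbx_0)\,p(\mbx,T\g\mbx_0)\d{\mbx_0}$ yields
\begin{equation*}
\KL{p(\mbx,T)}{\ps(\mbx)} \;\leq\; \int \pdata(\mbx_0)\,\KL{p(\mbx,T\g\mbx_0)}{\ps(\mbx)}\,\d{\mbx_0}.
\end{equation*}
This reduces the problem to bounding a Gaussian--Gaussian KL averaged over $\pdata$, for which there is a closed form $\KL{\cN(\mbmu,s\mbI)}{\cN(\mbmu_\star,s_\star\mbI)}=\tfrac{1}{2}\bigl[\tfrac{\lVert\mbmu-\mbmu_\star\rVert^2}{s_\star}+d\tfrac{s}{s_\star}-d-d\ln\tfrac{s}{s_\star}\bigr]$.

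For the Variance Exploding case this is immediate: with $\ps=\cN(\mbzero,(\sigma^2(T)-\sigma^2(0))\mbI)$ and $p(\mbx,T\g\mbx_0)=\cN(\mbx_0,(\sigma^2(T)-\sigma^2(0))\mbI)$ the variance terms cancel and only the mean mismatch survives, giving exactly $\tfrac{\lVert\mbx_0\rVert^2}{2(\sigma^2(T)-\sigma^2(0))}$. Setting $C_2=\tfrac{1}{2}\E_{\pdata}\lVert\mbx_0\rVert^2$ finishes this case.

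For the Variance Preserving case, abbreviate $A(T)=\int_0^T\beta(t)\d{t}$, so that the conditional is $\cN(e^{-A(T)/2}\mbx_0,(1-e^{-A(T)})\mbI)$ and $\ps=\cN(\mbzero,\mbI)$. The closed form gives
\begin{equation*}
\KL{p(\mbx,T\g\mbx_0)}{\ps(\mbx)}=\tfrac{1}{2}\bigl[e^{-A(T)}\lVert\mbx_0\rVert^2 - d\,e^{-A(T)} - d\ln(1-e^{-A(T)})\bigr].
\end{equation*}
The mean-mismatch piece already carries the desired factor $e^{-A(T)}$; the work is in controlling the logarithm. I would use $-\ln(1-x)\leq x/(1-x)$ for $x\in[0,1)$, which for any $T$ with $A(T)\geq\ln 2$ yields $-\ln(1-e^{-A(T)})\leq 2e^{-A(T)}$. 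Plugging back and taking the expectation under $\pdata$ then gives the bound with $C_1=\tfrac{1}{2}(\E_{\pdata}\lVert\mbx_0\rVert^2+d)$ (the regime $A(T)<\ln 2$ contributes only a bounded correction that can be absorbed into $C_1$).

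The main obstacle is really only the logarithmic term in the VP case, since a naive Taylor expansion would hide a constant that depends on $T$; the one-line inequality above circumvents this cleanly and makes the exponential decay explicit. Everything else reduces to convexity of KL and a direct Gaussian computation.
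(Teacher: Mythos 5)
Your route is genuinely different from the paper's. The paper never touches the conditional Gaussians: for VP it differentiates $\KL{p(\mbx,t)}{\cN(\mbzero,\mbI)}$ in $t$ via the Fokker--Planck equation, identifies the derivative as minus a relative Fisher information, bounds it with the logarithmic Sobolev inequality, and closes with Gronwall; for VE it runs the same argument in the convolution parameter $\omega$. Your argument instead uses convexity of the KL in its first argument applied to the mixture $p(\mbx,T)=\int\pdata(\mbx_0)p(\mbx,T\g\mbx_0)\,\dd\mbx_0$, plus the closed-form Gaussian--Gaussian KL. What your approach buys is elementary computations and fully explicit constants ($C_2=\tfrac12\E_{\pdata}\norm{\mbx_0}^2$, versus the paper's constants which involve the KL of $\pdata$ or of $p(\mbx,\tau)$ to a Gaussian); the VE case in particular is a clean two-line argument. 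What the paper's approach buys is that the KL is shown to be \emph{monotonically decreasing} in $T$ (the derivative computations in its appendix are reused verbatim in the proof of \cref{theo_tstar}), which does not follow from your upper bound alone, and a bound that remains finite uniformly down to $T=0$.

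There is one genuine gap, in the VP case for small $T$. Your intermediate quantity $\int\pdata(\mbx_0)\KL{p(\mbx,T\g\mbx_0)}{\ps(\mbx)}\,\dd\mbx_0$ contains the term $-\tfrac{d}{2}\ln\bigl(1-e^{-A(T)}\bigr)$, which diverges as $T\to 0$ (the conditional collapses to a point mass, whose KL to $\cN(\mbzero,\mbI)$ is infinite). So for $A(T)<\ln 2$ your chain of inequalities gives an unbounded, not a "bounded," correction, and the assertion that this regime can be absorbed into $C_1$ does not follow from anything you have computed. It is true that $\KL{p(\mbx,T)}{\ps(\mbx)}$ itself stays bounded there, but establishing this requires an extra ingredient your proof does not supply: e.g., the data-processing inequality together with the fact that $\cN(\mbzero,\mbI)$ is invariant for the instantaneous VP generator, giving $\KL{p(\mbx,T)}{\ps(\mbx)}\leq\KL{\pdata(\mbx)}{\ps(\mbx)}$ for all $T$, after which $e^{-A(T)}\geq\tfrac12$ on that regime lets you enlarge $C_1$. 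With that one sentence added (and the standing assumptions that $\pdata$ has finite second moment and finite KL to the standard Gaussian), your proof is complete; the VE case needs no repair.
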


Our proof uses results from \citep{villani2009optimal}, the logaritmic Sobolev Inequality and Gronwall inequality (see Appendix for details).
The consequence of \cref{prop:kl_vanish} is that to maximize the \gls{ELBO}, the diffusion time $T$ should be as large as possible (ideally, $T \to \infty$), such that the \gls{KL} term vanishes.
This result is in line with current practices for training score-based diffusion processes, that argue for sufficiently long diffusion times \cite{de2021diffusion}.
Our analysis, on the other hand, highlights how this term is only one of the two contributions to the \gls{ELBO}.

Now, we focus our attention on studying the behavior of the second component, $\cG(\cdot)$.
Before that, we define a few quantities that allow us to write the next important result.

\begin{definition}\label{def:optimal_score}
	We define the \underline{optimal score} $\widehat\mbs_\mbtheta$ for {any} diffusion time $T$, as the score obtained using parameters that minimize $I(\mbs_{\mbtheta},T)$.
	Similarly, we define the \underline{optimal score gap} $\cG(\widehat\mbs_{\mbtheta},T)$ for {any} diffusion time $T$, as the gap attained when using the {optimal score}.
\end{definition}

\begin{restatable}{lemma}{propscoreerror}\label{prop:score_error}
	The optimal score gap term ${\cG}(\widehat\mbs_{\mbtheta},T)$ is a non-decreasing function in $T$.
	That is, given $T_2 > T_1$, and $\mbtheta_1=\argmin_{\mbtheta}I(\mbs_{\mbtheta},T_1),\mbtheta_2=\argmin_{\mbtheta}I(\mbs_{\mbtheta},T_2)$,
	then $\cG(\mbs_{\mbtheta_2},T_2) \geq \cG(\mbs_{\mbtheta_1},T_1)$.
	\end{restatable}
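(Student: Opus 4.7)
The plan is to exploit two facts simultaneously: (i) the gap $\cG(\mbs_\mbtheta,T)$ is an integral of a non-negative integrand over $[0,T]$, so it is monotonically non-decreasing in $T$ for any fixed $\mbtheta$; and (ii) the parameters $\mbtheta_1$ that minimize $I(\mbs_\mbtheta,T_1)$ are also the minimizers of $\cG(\mbs_\mbtheta,T_1)$, because $I(\mbs_\mbtheta,T)=K(T)+\cG(\mbs_\mbtheta,T)$ and $K(T)$ does not depend on $\mbtheta$. The key reduction I would carry out at the very start is therefore to rewrite the optimal-score definition in terms of the gap,
\begin{equation*}
  \mbtheta_i \;=\; \argmin_{\mbtheta}\, I(\mbs_{\mbtheta},T_i) \;=\; \argmin_{\mbtheta}\, \cG(\mbs_{\mbtheta},T_i),\qquad i\in\{1,2\},
\end{equation*}
which is the standard equivalence between denoising score matching and explicit score matching (already implicit in the derivation of $K(T)$ from $I(\mbs_\mbtheta,T)$ in the paper).

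Next, I would chain two inequalities. First, optimality of $\mbtheta_1$ on the horizon $[0,T_1]$ gives
\begin{equation*}
  \cG(\mbs_{\mbtheta_1},T_1) \;\leq\; \tfrac{1}{2}\!\int_{0}^{T_1} g^2(t)\,\E_{\sim\eqref{eq:diffusion_sde}}\!\left[\norm{\mbs_{\mbtheta_2}(\mbx_t,t)-\gradient\log p(\mbx_t,t)}^2\right]\dd t,
\end{equation*}
because the right-hand side is $\cG$ evaluated at a sub-optimal parameter $\mbtheta_2$ on $[0,T_1]$. Second, since the integrand is pointwise non-negative, extending the integration from $[0,T_1]$ to the longer interval $[0,T_2]$ can only increase the value, so the right-hand side above is bounded by $\cG(\mbs_{\mbtheta_2},T_2)$. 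Combining these two inequalities yields the claim $\cG(\mbs_{\mbtheta_1},T_1)\leq \cG(\mbs_{\mbtheta_2},T_2)$.

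There is really no technical obstacle here beyond the equivalence used at the first step: one must be careful that the integrand of $I$ and the integrand of $\cG$ differ (one uses $\gradient\log p(\mbx_t,t\g\mbx_0)$, the other $\gradient\log p(\mbx_t,t)$), yet their $\mbtheta$-argmin coincide because they differ by the $\mbtheta$-independent quantity $K(T)$. Once this is accepted, the proof reduces to the two-line monotonicity argument above, with no regularity conditions needed on $\mbf$, $g$, or on the parametric family of scores beyond the existence of the minimizers $\mbtheta_1,\mbtheta_2$ as postulated in \cref{def:optimal_score}.
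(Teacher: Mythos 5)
Your proof is correct and follows essentially the same route as the paper's: both arguments combine (a) optimality of $\mbtheta_1$ over the horizon $[0,T_1]$ with (b) non-negativity of the gap contribution accrued on $[T_1,T_2]$. The only cosmetic difference is that you phrase everything directly in terms of $\cG$ via the explicit score-matching form $\cG(\mbs_\mbtheta,T)=\tfrac{1}{2}\int_0^T g^2(t)\,\E\!\left[\norm{\mbs_\mbtheta(\mbx_t,t)-\gradient\log p(\mbx_t,t)}^2\right]\dd t$, whereas the paper splits $I(\mbs_\mbtheta,T_2)$ into the two subintervals and subtracts $K(T_2)$ at the end; the two chains of inequalities are identical.
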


The proof (see Appendix) is a direct consequence of the definition of $\cG$ and the optimality of the score.
Note that \cref{prop:score_error} does not imply that $\cG(\mbs_{\mbtheta_a},T_2)\geq\cG(\mbs_{\mbtheta_b},T_1)$ holds for generic parameters $\mbtheta_a,\mbtheta_b$.
Moreover, although it would be tempting to use a pre-trained score function for a given diffusion time $T_2$, and apply it in a reverse process for a different diffusion time $T_1$, this would be detrimental.
Indeed, given $\mbtheta_1, \mbtheta_2$ from \cref{prop:score_error}, $\cG(\mbs_{\mbtheta_2},T_1) \geq \cG(\mbs_{\mbtheta_1},T_1)$. As $\mbtheta_1$ is optimal for $T_1$ anything else is consequently sub-optimal. Practically speaking, this means that if the goal is to fit a score function to be used for time $T$, then the training should be performed for the same $T$.

\subsection{Is there an optimal diffusion time?}
While diffusion processes are generally studied for $T \rightarrow \infty$, for practical reasons, diffusion times in score-based models have been arbitrarily set to be ``sufficiently large'' in the literature.
Here we formally argue, for the first time, about the existence of an optimal diffusion time, which strikes the right balance between the gap $\cG(\cdot)$ and the \gls{KL} terms of the \gls{ELBO} in \cref{elbo1}.

\begin{restatable}{proposition}{theotstar}\label{theo_tstar}
	There exists at least one optimal diffusion time $T^\star$ in the interval $[0, \infty]$, which maximizes the \gls{ELBO}, that is $T^\star = \argmax_T \Lelbo(\widehat\mbs_\theta, T)$.
	\end{restatable}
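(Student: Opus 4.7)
The plan is to use the ELBO decomposition \cref{elbo1} to recast the problem as a one-dimensional optimization of a function defined on the compactified interval $[0,\infty]$, and then invoke the extreme value theorem. Writing
\begin{equation*}
\Lelbo(\widehat\mbs_\theta, T) = \mathbb{E}_{\pdata(\mbx)}\log \pdata(\mbx) - \Phi(T), \qquad \Phi(T) \defeq \cG(\widehat\mbs_\theta, T) + \KL{p(\mbx,T)}{\ps(\mbx)},
\end{equation*}
the first term is a constant in $T$, so it suffices to show that the non-negative function $\Phi$ attains its minimum on $[0,\infty]$.

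First, I would check the endpoints of $\Phi$. At $T=0$ the integrals defining $I(\widehat\mbs_\theta, 0)$ and $K(0)$ are over an empty interval, so $\cG(\widehat\mbs_\theta, 0) = 0$ and $\Phi(0) = \KL{\pdata}{\ps}$, which is finite but generally large. As $T\to\infty$, \cref{prop:kl_vanish} gives $\KL{p(\mbx,T)}{\ps} \to 0$; by the monotonicity statement of \cref{prop:score_error}, the optimal gap is non-decreasing and therefore admits a (possibly infinite) limit $\cG_\infty$, so $\Phi(\infty) \defeq \cG_\infty \in [0,\infty]$ is well defined.

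Second, I would establish continuity of $\Phi$ on $(0,\infty)$. The KL term is continuous because $p(\mbx, T\g\mbx_0)$ is Gaussian with parameters that vary smoothly in $T$ (cf. \cref{tab:diff_types}), yielding continuity of $p(\mbx, T)$ via marginalization over $\pdata$ and of the KL against the smooth reference $\ps$ by a standard dominated convergence argument. For the gap term, I would use $\cG(\widehat\mbs_\theta, T) = I(\widehat\mbs_\theta, T) - K(T)$: the quantity $K(T)$ is continuous as an integral of a continuous integrand up to $T$, while continuity of $T \mapsto \inf_{\mbtheta} I(\mbs_\mbtheta, T)$ follows from the same integral representation combined with a dominated convergence / $\Gamma$-convergence argument applied to the parametric family of scores. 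Continuity on $(0,\infty)$ together with the explicit endpoint values extends $\Phi$ to a lower semi-continuous function on the compactified interval $[0,\infty]$ (topologically a copy of $[0,1]$). The extreme value theorem for lower semi-continuous functions on compact sets then guarantees that $\Phi$ attains its infimum at some $T^\star \in [0,\infty]$, which by construction is $\argmax_T \Lelbo(\widehat\mbs_\theta, T)$.

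The main obstacle I anticipate is making rigorous the continuity (or lower semi-continuity) of the \emph{optimal} gap $\cG(\widehat\mbs_\theta, T)$, since the optimizer $\widehat\mbs_\theta$ depends on $T$ and could in principle jump. The cleanest way forward is to avoid tracking $\widehat\mbs_\theta$ pointwise and instead reason about the value functional $T \mapsto \inf_{\mbtheta} I(\mbs_\mbtheta, T)$ directly: the monotonicity from \cref{prop:score_error} already produces one-sided limits everywhere, and this semi-continuity on a compact interval is enough to invoke the extreme value theorem and conclude existence of $T^\star$.
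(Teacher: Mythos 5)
Your argument reaches the right conclusion but by a genuinely different route from the paper. The paper works with derivatives: it notes that $\pdv{\cG}{T}\geq 0$ by \cref{prop:score_error}, that the \textsc{kl} derivative is non-positive (from the computations in the proof of \cref{prop:kl_vanish}) and vanishes as $T\to\infty$, and concludes that $\pdv{\Lelbo}{T}$ must have at least one zero in $[0,\infty]$ — an intermediate-value-type argument on the derivative. You instead compactify $[0,\infty]$, reduce the claim to the attainment of the minimum of $\Phi(T)=\cG(\widehat\mbs_\mbtheta,T)+\KL{p(\mbx,T)}{\ps(\mbx)}$, and invoke the extreme value theorem. Your route has the advantage of not presupposing differentiability of the optimal gap in $T$ (which the paper silently assumes), and of directly delivering a \emph{global} maximizer rather than a critical point. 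The one caveat is the regularity you yourself flag: for attainment of the minimum you need \emph{lower} semi-continuity of $\Phi$, whereas an infimum over $\mbtheta$ of continuous functionals is naturally \emph{upper} semi-continuous, and monotonicity of the optimal gap gives one-sided limits but not left-continuity; so that step still requires an argument (or an explicit continuity assumption) rather than following from \cref{prop:score_error} alone. Since the paper's own proof rests on an analogous unproven smoothness assumption, and the target interval is the closed, compactified $[0,\infty]$, this is a comparable level of informality rather than a fatal gap.
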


While the proof for the general case is available in the Appendix, the analytic solution for the optimal diffusion time is elusive, as a full characterization of the gap term is particularly challenging.
Additional assumptions would guarantee boundedness of $T^\star$.

\begin{figure}[htpb]
	\centering
	\includegraphics[width=0.47\textwidth]{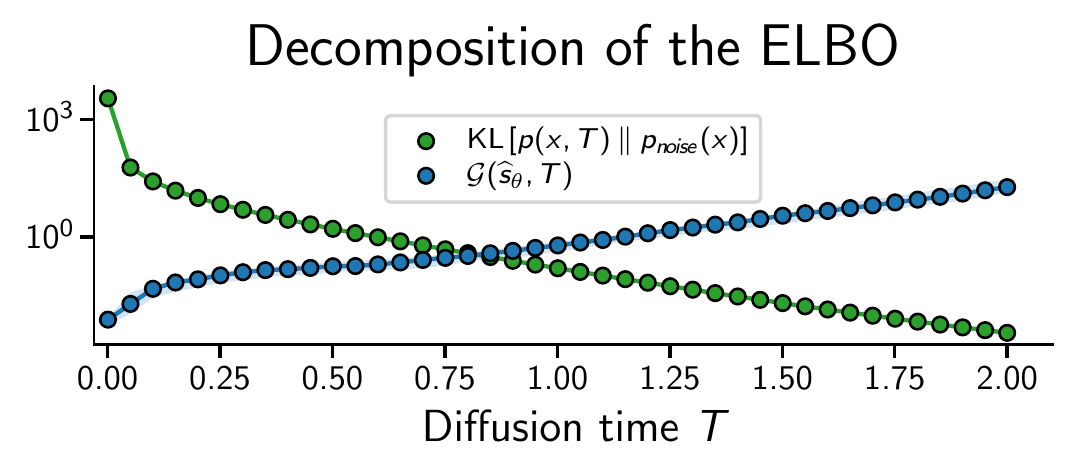}
	\includegraphics[width=0.47\textwidth]{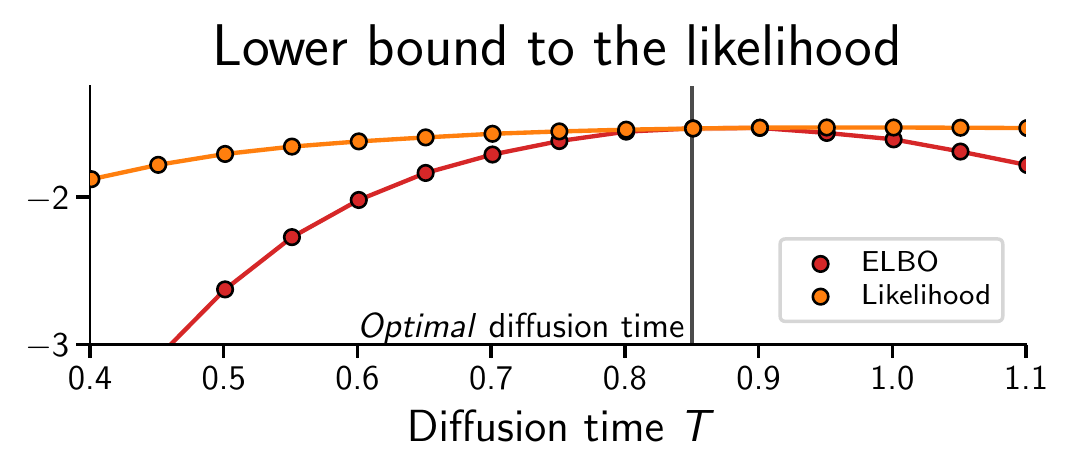}
	\caption{\gls{ELBO} decomposition, \gls{ELBO} and likelihood for a 1D toy model, as a function of diffusion time $T$. Tradeoff and optimality numerical results confirm our theory.}
	\label{fig:toy_elbo_decomposition}
\end{figure}

Empirically, we use \cref{fig:toy_elbo_decomposition} to illustrate the tradeoff and the optimality arguments through the lens of the same toy example we use in \cref{sec:introduction}.
On the left, we show the \gls{ELBO} decomposition.
We can verify that $\cG(\mbs_\mbtheta, T)$ is an increasing function of $T$, whereas the \textsc{kl} term is a decreasing function of $T$.
Even in the simple case of a toy example, the tension between small and large values of $T$ is clear.
On the right, we show the values of the \gls{ELBO} and of the likelihood as a function of $T$.
We then verify the validity of our claims: the \gls{ELBO} is neither maximized by an infinite diffusion time, nor by a ``sufficiently large'' value. Instead, there exists an optimal diffusion time $T^\star \eqsim 0.85$ which, for this example, is smaller than what is typically used in practical implementations, i.e. $T=1.0$

Next, we present a new method that admits smaller diffusion times, which is the ultimate goal of our work.
We also show that the \gls{ELBO} of our approach is at least as good as the one of a standard diffusion model, configured to use its optimal diffusion time $T^\star$.

\section{A practical new method for decreasing diffusion times}\label{sec:method}
The \gls{ELBO} decomposition in \cref{elbo1} and the bounds in \cref{prop:kl_vanish} and \cref{prop:score_error} highlight a dilemma. 
We thus propose a simple method that allows us to achieve both a small gap $\cG(\mbs_{\mbtheta},T)$, and a small discrepancy $\KL{p(\mbx,T)}{\ps(\mbx)}$. 
Before that, let us use \cref{fig:bridges} to summarize all densities involved and the effects of the various approximations, which will be useful to visualize our proposal.

\definecolor{tikz_red}{HTML}{D0021B}
\definecolor{tikz_green}{HTML}{7ED321}
\definecolor{tikz_orange}{HTML}{F5A623}
\definecolor{tikz_cyan}{HTML}{4A90E2}

\begin{minipage}{.46\textwidth}
The data distribution $\pdata(\mbx)$ is transformed into the noise distribution $p(\mbx,T)$ through the forward diffusion process. 
\textcolor{tikz_green!80!black}{Ideally}, starting from $p(\mbx,T)$ we can recover the data distribution by simulating using the exact score $\gradient\log p$. 
\textcolor{tikz_orange!80!black}{Using the approximated score} $\mbs_{\mbtheta}$ and the same initial conditions, the backward process ends up in $q^{(1)}(\mbx,T)$, whose discrepancy \circled{1} to $\pdata(\mbx)$ is $\cG(\mbs_{\mbtheta},T)$.
However, the distribution $p(\mbx,T)$ is unknown and replaced with an easy distribution $\ps(\mbx)$, accounting for an error \circled{a} measured as $\KL{p(\mbx,T)}{\ps(\mbx)}$.
\textcolor{tikz_red!80!black}{With score and initial distribution approximated}, the backward process ends up in $q^{(3)}(\mbx,T)$, where the discrepancy \circled{3} from $\pdata$ is the sum of terms $\cG(\mbs_{\mbtheta},T)+\KL{p(\mbx,T)}{\ps}$.
\end{minipage}\hfill
\begin{minipage}{.52\textwidth}
\centering
	\scalebox{0.6}{

\tikzset{every picture/.style={line width=0.75pt}} 

\begin{tikzpicture}[x=0.75pt,y=0.75pt,yscale=-1,xscale=1]

\draw  [line width=2.25]  (96,110.65) .. controls (96,94.27) and (109.27,81) .. (125.65,81) .. controls (142.03,81) and (155.3,94.27) .. (155.3,110.65) .. controls (155.3,127.03) and (142.03,140.3) .. (125.65,140.3) .. controls (109.27,140.3) and (96,127.03) .. (96,110.65) -- cycle ;
\draw  [line width=2.25]  (286,110.65) .. controls (286,94.27) and (299.27,81) .. (315.65,81) .. controls (332.03,81) and (345.3,94.27) .. (345.3,110.65) .. controls (345.3,127.03) and (332.03,140.3) .. (315.65,140.3) .. controls (299.27,140.3) and (286,127.03) .. (286,110.65) -- cycle ;
\draw  [line width=2.25]  (392,261.65) .. controls (392,245.27) and (405.27,232) .. (421.65,232) .. controls (438.03,232) and (451.3,245.27) .. (451.3,261.65) .. controls (451.3,278.03) and (438.03,291.3) .. (421.65,291.3) .. controls (405.27,291.3) and (392,278.03) .. (392,261.65) -- cycle ;
\draw  [line width=2.25]  (288,213.65) .. controls (288,197.27) and (301.27,184) .. (317.65,184) .. controls (334.03,184) and (347.3,197.27) .. (347.3,213.65) .. controls (347.3,230.03) and (334.03,243.3) .. (317.65,243.3) .. controls (301.27,243.3) and (288,230.03) .. (288,213.65) -- cycle ;
\draw  [line width=2.25]  (99,181.65) .. controls (99,165.27) and (112.27,152) .. (128.65,152) .. controls (145.03,152) and (158.3,165.27) .. (158.3,181.65) .. controls (158.3,198.03) and (145.03,211.3) .. (128.65,211.3) .. controls (112.27,211.3) and (99,198.03) .. (99,181.65) -- cycle ;
\draw  [line width=2.25]  (100,245.65) .. controls (100,229.27) and (113.27,216) .. (129.65,216) .. controls (146.03,216) and (159.3,229.27) .. (159.3,245.65) .. controls (159.3,262.03) and (146.03,275.3) .. (129.65,275.3) .. controls (113.27,275.3) and (100,262.03) .. (100,245.65) -- cycle ;
\draw  [line width=2.25]  (102,310.65) .. controls (102,294.27) and (115.27,281) .. (131.65,281) .. controls (148.03,281) and (161.3,294.27) .. (161.3,310.65) .. controls (161.3,327.03) and (148.03,340.3) .. (131.65,340.3) .. controls (115.27,340.3) and (102,327.03) .. (102,310.65) -- cycle ;
\draw    (155.3,110.65) .. controls (194.7,81.1) and (245.45,72.89) .. (284.24,108.97) ;
\draw [shift={(286,110.65)}, rotate = 224.26] [fill={rgb, 255:red, 0; green, 0; blue, 0 }  ][line width=0.08]  [draw opacity=0] (8.93,-4.29) -- (0,0) -- (8.93,4.29) -- cycle    ;
\draw [color={rgb, 255:red, 126; green, 211; blue, 33 }  ,draw opacity=1 ][line width=1.5]    (158.4,113.96) .. controls (192.22,148.5) and (239.44,145.57) .. (286,110.65) ;
\draw [shift={(155.3,110.65)}, rotate = 48.18] [fill={rgb, 255:red, 126; green, 211; blue, 33 }  ,fill opacity=1 ][line width=0.08]  [draw opacity=0] (11.61,-5.58) -- (0,0) -- (11.61,5.58) -- cycle    ;
\draw [color={rgb, 255:red, 245; green, 166; blue, 35 }  ,draw opacity=1 ][line width=2.25]    (164,181.62) .. controls (237.05,180.64) and (270.87,154.03) .. (288.42,125.75) ;
\draw [shift={(158.3,181.65)}, rotate = 0.26] [fill={rgb, 255:red, 245; green, 166; blue, 35 }  ,fill opacity=1 ][line width=0.08]  [draw opacity=0] (14.29,-6.86) -- (0,0) -- (14.29,6.86) -- cycle    ;
\draw [color={rgb, 255:red, 74; green, 144; blue, 226 }  ,draw opacity=1 ][line width=1.5]    (169.43,244.01) .. controls (186.58,244.06) and (202.1,237.69) .. (217.62,230.7) .. controls (219.25,229.97) and (220.89,229.23) .. (222.53,228.48) .. controls (222.67,228.42) and (222.81,228.35) .. (222.95,228.29) .. controls (240.23,220.48) and (258.56,212.68) .. (287.97,212.15)(169.59,247) .. controls (187.12,247.06) and (202.99,240.58) .. (218.85,233.44) .. controls (220.49,232.7) and (222.12,231.96) .. (223.76,231.22) .. controls (223.9,231.15) and (224.05,231.09) .. (224.19,231.02) .. controls (241.16,223.36) and (259.15,215.67) .. (288.03,215.15) ;
\draw [shift={(159.3,245.65)}, rotate = 1.41] [fill={rgb, 255:red, 74; green, 144; blue, 226 }  ,fill opacity=1 ][line width=0.08]  [draw opacity=0] (11.61,-5.58) -- (0,0) -- (11.61,5.58) -- cycle    ;
\draw [color={rgb, 255:red, 208; green, 2; blue, 27 }  ,draw opacity=1 ][line width=2.25]    (166.49,310.79) .. controls (224.91,312.54) and (362.56,320.61) .. (399.42,283.75) ;
\draw [shift={(161.3,310.65)}, rotate = 1.41] [fill={rgb, 255:red, 208; green, 2; blue, 27 }  ,fill opacity=1 ][line width=0.08]  [draw opacity=0] (14.29,-6.86) -- (0,0) -- (14.29,6.86) -- cycle    ;
\draw [color={rgb, 255:red, 74; green, 144; blue, 226 }  ,draw opacity=1 ][line width=1.5]    (357.81,213.87) .. controls (355.36,213.44) and (358.58,213.98) .. (361.55,214.52) .. controls (384.43,218.68) and (393.11,222.91) .. (401.77,241.11)(357.3,216.82) .. controls (354.85,216.4) and (358.06,216.94) .. (361.01,217.48) .. controls (382.54,221.38) and (390.88,225.2) .. (399.06,242.39) ;
\draw [shift={(347.3,213.65)}, rotate = 9.21] [fill={rgb, 255:red, 74; green, 144; blue, 226 }  ,fill opacity=1 ][line width=0.08]  [draw opacity=0] (11.61,-5.58) -- (0,0) -- (11.61,5.58) -- cycle    ;
\draw  [dash pattern={on 4.5pt off 4.5pt}]  (421.65,144) -- (421.65,229) ;
\draw [shift={(421.65,232)}, rotate = 270] [fill={rgb, 255:red, 0; green, 0; blue, 0 }  ][line width=0.08]  [draw opacity=0] (8.93,-4.29) -- (0,0) -- (8.93,4.29) -- cycle    ;
\draw [shift={(421.65,141)}, rotate = 90] [fill={rgb, 255:red, 0; green, 0; blue, 0 }  ][line width=0.08]  [draw opacity=0] (8.93,-4.29) -- (0,0) -- (8.93,4.29) -- cycle    ;
\draw  [dash pattern={on 4.5pt off 4.5pt}]  (315.65,140.3) -- (416.3,140.3) ;
\draw  [dash pattern={on 4.5pt off 4.5pt}]  (32.15,111.65) -- (96,111.65) ;
\draw  [dash pattern={on 4.5pt off 4.5pt}]  (80.3,181.65) -- (99,181.65) ;
\draw  [dash pattern={on 4.5pt off 4.5pt}]  (57.3,245.65) -- (100,245.65) ;
\draw  [dash pattern={on 4.5pt off 4.5pt}]  (32.3,310.65) -- (102,310.65) ;
\draw  [dash pattern={on 4.5pt off 4.5pt}]  (80.3,114) -- (80.3,178.65) ;
\draw [shift={(80.3,181.65)}, rotate = 270] [fill={rgb, 255:red, 0; green, 0; blue, 0 }  ][line width=0.08]  [draw opacity=0] (8.93,-4.29) -- (0,0) -- (8.93,4.29) -- cycle    ;
\draw [shift={(80.3,111)}, rotate = 90] [fill={rgb, 255:red, 0; green, 0; blue, 0 }  ][line width=0.08]  [draw opacity=0] (8.93,-4.29) -- (0,0) -- (8.93,4.29) -- cycle    ;
\draw  [dash pattern={on 4.5pt off 4.5pt}]  (57.15,113.65) -- (57.15,242.65) ;
\draw [shift={(57.15,245.65)}, rotate = 270] [fill={rgb, 255:red, 0; green, 0; blue, 0 }  ][line width=0.08]  [draw opacity=0] (8.93,-4.29) -- (0,0) -- (8.93,4.29) -- cycle    ;
\draw [shift={(57.15,110.65)}, rotate = 90] [fill={rgb, 255:red, 0; green, 0; blue, 0 }  ][line width=0.08]  [draw opacity=0] (8.93,-4.29) -- (0,0) -- (8.93,4.29) -- cycle    ;
\draw  [dash pattern={on 4.5pt off 4.5pt}]  (32.15,114.65) -- (32.3,307.65) ;
\draw [shift={(32.3,310.65)}, rotate = 269.96] [fill={rgb, 255:red, 0; green, 0; blue, 0 }  ][line width=0.08]  [draw opacity=0] (8.93,-4.29) -- (0,0) -- (8.93,4.29) -- cycle    ;
\draw [shift={(32.15,111.65)}, rotate = 89.96] [fill={rgb, 255:red, 0; green, 0; blue, 0 }  ][line width=0.08]  [draw opacity=0] (8.93,-4.29) -- (0,0) -- (8.93,4.29) -- cycle    ;
\draw  [dash pattern={on 4.5pt off 4.5pt}]  (315.65,140.3) -- (347.57,140.3) ;
\draw  [dash pattern={on 4.5pt off 4.5pt}]  (317.65,184) -- (352.35,184) ;
\draw  [dash pattern={on 4.5pt off 4.5pt}]  (347.57,143.3) -- (347.57,181) ;
\draw [shift={(347.57,184)}, rotate = 270] [fill={rgb, 255:red, 0; green, 0; blue, 0 }  ][line width=0.08]  [draw opacity=0] (8.93,-4.29) -- (0,0) -- (8.93,4.29) -- cycle    ;
\draw [shift={(347.57,140.3)}, rotate = 90] [fill={rgb, 255:red, 0; green, 0; blue, 0 }  ][line width=0.08]  [draw opacity=0] (8.93,-4.29) -- (0,0) -- (8.93,4.29) -- cycle    ;




\draw (101,100) node [anchor=north west][inner sep=0.75pt]   [align=left] {$\displaystyle \pdata(\mathbf{x})$};
\draw (293,101) node [anchor=north west][inner sep=0.75pt]   [align=left] {$\displaystyle p(\mathbf{x} ,T)$};
\draw (402,252) node [anchor=north west][inner sep=0.75pt]   [align=left] {$\displaystyle \ps(\mathbf{x})$};
\draw (302,204) node [anchor=north west][inner sep=0.75pt]   [align=left] {$\displaystyle \nu (\mathbf{x})$};
\draw (102,170) node [anchor=north west][inner sep=0.75pt]   [align=left] {$\displaystyle q^{(1)}(\mathbf{x} ,T)$};
\draw (103,234) node [anchor=north west][inner sep=0.75pt]   [align=left] {$\displaystyle q^{(2)}(\mathbf{x} ,T)$};
\draw (105,299) node [anchor=north west][inner sep=0.75pt]   [align=left] {$\displaystyle q^{(3)}(\mathbf{x} ,T)$};
\draw (255,296) node [anchor=north west][inner sep=0.75pt]   [align=left] {$\displaystyle \mathbf{s}_{\boldsymbol{\theta }}$};
\draw (220,205) node [anchor=north west][inner sep=0.75pt]   [align=left] {$\displaystyle \mathbf{s}_{\boldsymbol{\theta }}$};
\draw (206,159) node [anchor=north west][inner sep=0.75pt]   [align=left] {$\displaystyle \mathbf{s}_{\boldsymbol{\theta }}$};
\draw (192,117) node [anchor=north west][inner sep=0.75pt]   [align=left] {$\displaystyle \gradient \ \log p$};

\draw  (66, 140) node [shape=circle, draw, inner sep=3pt, anchor=north west, fill=white] {\sffamily 1};
\draw  (42, 168) node [shape=circle, draw, inner sep=3pt, anchor=north west, fill=white] {\sffamily 2};
\draw  (17, 202) node [shape=circle, draw, inner sep=3pt, anchor=north west, fill=white] {\sffamily 3};
\draw (332, 155) node [shape=circle, draw, inner sep=3pt, anchor=north west, fill=white] {\sffamily b};
\draw (407, 180) node [shape=circle, draw, inner sep=3pt, anchor=north west, fill=white] {\sffamily a};

\end{tikzpicture}

		}
	
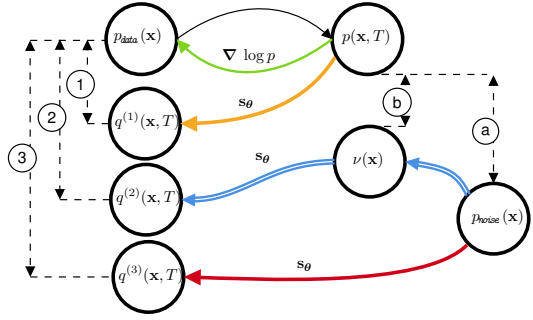
\captionof{figure}{Intuitive illustration of the forward and backward diffusion processes. Discrepancies between distributions are illustrated as distances. Color coding discussed in the text.
}
	\label{fig:bridges}
\end{minipage}

\paragraph{Multiple bridges across densities.}
In summary, we reduce the gap term by selecting smaller diffusion times and we propose to \textcolor{tikz_cyan!80!black}{learn an auxiliary model} to transform the initial density $\ps(\mbx)$ into a density $\nu_{\mbphi}(\mbx)$, which is as close as possible to $p(\mbx,T)$, thus avoiding the penalty of a large \gls{KL} term. 
To implement this, we first \textit{transform} the simple distribution $\ps$ into the distribution $\nu_{\mbphi}(\mbx)$, whose discrepancy \circled{b} $\KL{p(\mbx,T)}{\nu_{\mbphi}(\mbx)}$ is smaller than \circled{a} . Then, \textcolor{tikz_cyan!80!black}{starting from from the auxiliary model} $\nu_{\mbphi}(\mbx)$, we use the approximate score $\mbs_{\mbtheta}$ to simulate the backward process reaching $q^{(2)}(\mbx,T)$. 
This solution has a discrepancy \circled{2} from the data distribution of $\cG(\mbs_{\mbtheta},T)+\KL{p(\mbx,T)}{\nu_{\mbphi}(\mbx)}$, which we will quantify later in the section. 
Intuitively, we introduce two bridges. 
The first bridge connects the noise distribution $\ps$ to an auxiliary distribution $\nu_{\mbphi}(\mbx)$ that is as close as possible to that obtained by the forward diffusion process. 
The second bridge---a standard reverse diffusion process---connects the smooth distribution $\nu_{\mbphi}(\mbx)$ to the data distribution. 
Notably, our approach has important guarantees, which we discuss next.

\subsection{Auxiliary model fitting and guarantees}\label{sec:firstbridge}

We begin by stating the requirements we consider for the density $\nu_{\mbphi}(\mbx)$.
First, as it is the case for $\ps$, it should be easy to generate samples from $\nu_{\mbphi}(\mbx)$ in order to initialize the reverse diffusion process. 
Second, the auxiliary model should allow us to compute the likelihood of the samples generated through the overall generative process, which begins in $\ps$, passes through $\nu_{\mbphi}(\mbx)$, and arrives in $q(\mbx, T)$.

The fitting procedure of the auxiliary model is straightforward. 
First, we recognize that minimizing $\KL{p(\mbx,T)}{\nu_{\mbphi}(\mbx)}$ w.r.t $\mbphi$ also minimizes $\mathbb{E}_{p(\mbx,T)}\left[\log\nu_{\mbphi}(\mbx)\right]$, that we can use as loss function. 
To obtain the set of optimal parameters $\mbphi^\star$, we require samples from $p(\mbx,T)$, which can be easily obtained even if the density $p(\mbx,T)$ is not available.
Indeed, by sampling from $\pdata$, and $p(\mbx,T\g\mbx_0)$, we obtain an unbiased Monte Carlo estimate of $\mathbb{E}_{p(\mbx,T)}\left[\log\nu_{\mbphi}(\mbx)\right]$, and optimization of the loss can be performed.
Note that due to the affine nature of the drift, the conditional distribution $p(\mbx,T\g\mbx_0)$ is easy to sample from, as shown in \cref{tab:diff_types}.
From a practical point of view, it is important to notice that the fitting of $\nu_\phi$ is independent from the training of the score-matching objective, i.e. the result of $I(\mbs_\mbtheta)$ does not depend on the shape of this auxiliary distribution $\nu_\phi$.
This observation indicates that the two training procedures can be run concurrently, thus enabling considerable time savings.

Next, we show that the first bridge in our model reduces the \gls{KL} term, even for small diffusion times.
\begin{proposition}\label{prop:klless} 
Let's assume that $\ps(\mbx)$ is in the family spanned by $\nu_{\mbphi}$, i.e. there exists $\widetilde\mbphi$ such that $\nu_{\widetilde\mbphi}=\ps$.
Then we have that 
	\begin{equation}
		\KL{p(\mbx,T)}{\nu_{\mbphi^*}(\mbx)} \leq
		\KL{p(\mbx,T)}{\nu_{\widetilde\mbphi}(\mbx) } = 
		\KL{p(\mbx,T)}{\ps(\mbx)}.
	\end{equation}
\end{proposition}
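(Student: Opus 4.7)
The proof is a direct optimality argument, so the plan is very short. The key observation is that $\mbphi^\star$ is defined (in \cref{sec:firstbridge}) as the minimizer of $\KL{p(\mbx,T)}{\nu_\mbphi(\mbx)}$ with respect to $\mbphi$, equivalently as the minimizer of $-\mathbb{E}_{p(\mbx,T)}[\log \nu_\mbphi(\mbx)]$ (the two objectives differ only by the $\mbphi$-independent entropy of $p(\mbx,T)$).

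My plan is as follows. First, I will observe that by the assumption of the proposition, the fixed value $\widetilde{\mbphi}$ is a feasible point in the parameter space over which $\mbphi^\star$ is optimized. Therefore, by the very definition of $\mbphi^\star$ as a minimizer, we immediately have
\begin{equation*}
\KL{p(\mbx,T)}{\nu_{\mbphi^\star}(\mbx)} \;=\; \min_{\mbphi} \KL{p(\mbx,T)}{\nu_{\mbphi}(\mbx)} \;\leq\; \KL{p(\mbx,T)}{\nu_{\widetilde{\mbphi}}(\mbx)},
\end{equation*}
which gives the inequality in the statement. Second, the equality on the right is simply a rewrite: since by hypothesis $\nu_{\widetilde{\mbphi}}(\mbx) = \ps(\mbx)$ pointwise, we can substitute inside the \textsc{kl} and conclude $\KL{p(\mbx,T)}{\nu_{\widetilde{\mbphi}}(\mbx)} = \KL{p(\mbx,T)}{\ps(\mbx)}$.

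There is essentially no obstacle in this proof: the argument is a one-line application of the fact that a minimum over a set is no larger than the value at any particular point of the set, combined with the assumed realizability of $\ps$ within the parametric family $\{\nu_\mbphi\}$. The only subtlety worth flagging in the write-up is that the equivalence between minimizing $\KL{p(\mbx,T)}{\nu_\mbphi}$ and minimizing the cross-entropy $-\mathbb{E}_{p(\mbx,T)}[\log \nu_\mbphi]$ (which is the tractable objective actually used in practice, as discussed just before the proposition) requires noting that $\mathbb{E}_{p(\mbx,T)}[\log p(\mbx,T)]$ does not depend on $\mbphi$; this ensures that the $\mbphi^\star$ used in the statement genuinely coincides with the minimizer of the \textsc{kl}, so the optimality argument is legitimate.
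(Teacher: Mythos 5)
Your proof is correct and matches the paper's (implicit) argument exactly: the proposition follows immediately from the definition of $\mbphi^\star$ as the minimizer of the \textsc{kl} over the parametric family, combined with the realizability assumption $\nu_{\widetilde\mbphi}=\ps$; the paper offers no separate appendix proof precisely because it is this one-line optimality observation. Your remark that the tractable cross-entropy objective and the \textsc{kl} differ only by the $\mbphi$-independent entropy of $p(\mbx,T)$ is a worthwhile clarification consistent with the discussion preceding the proposition.
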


Since we introduce the auxiliary distribution $\nu$, we shall define a new \gls{ELBO} for our method:
\begin{align}
\Lelbo^{\mbphi}(\mbs_\mbtheta, T)=\mathbb{E}_{\pdata(\mbx)}\log \pdata(\mbx) - \cG(\mbs_{\mbtheta},T) - \KL{p(\mbx,T)}{\nu_{\mbphi}(\mbx)}
\end{align}

Recalling that $\widehat\mbs_\mbtheta$ is the optimal score for a generic time $T$, \cref{prop:klless} allows us to claim that $\Lelbo^{\mbphi^\star}(\widehat\mbs_\mbtheta, T) \geq \Lelbo(\widehat\mbs_\mbtheta, T)$. 
Then, we can state the following important result:
\begin{restatable}{proposition}{proptstarstar}\label{prop_tstarstar} 
Given the existence of $T^\star$, defined as the diffusion time such that the \gls{ELBO} is maximized (\cref{theo_tstar}), there exists at least one diffusion time 
$\tau \in [0,T^\star]$, such that 
$\Lelbo^{\mbphi^\star}(\widehat\mbs_\mbtheta, \tau) \geq \Lelbo(\widehat\mbs_\mbtheta, T^{*})$.
\end{restatable}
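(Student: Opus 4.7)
The plan is to reduce the claim to a pointwise comparison of the two ELBOs and then specialize to $T = T^\star$. First, I would observe that the two ELBOs differ only in their final term, since
\begin{equation*}
\Lelbo^{\mbphi^\star}(\widehat\mbs_\mbtheta, T) - \Lelbo(\widehat\mbs_\mbtheta, T) = \KL{p(\mbx,T)}{\ps(\mbx)} - \KL{p(\mbx,T)}{\nu_{\mbphi^\star}(\mbx)},
\end{equation*}
both sharing the same constant $\mathbb{E}_{\pdata}\log\pdata$ and the same optimal-score gap $\cG(\widehat\mbs_\mbtheta,T)$. By \cref{prop:klless}, under the mild assumption that $\ps$ lies in the family spanned by $\nu_\phi$, the right-hand side is non-negative at every $T \geq 0$, yielding the pointwise domination $\Lelbo^{\mbphi^\star}(\widehat\mbs_\mbtheta, T) \geq \Lelbo(\widehat\mbs_\mbtheta, T)$.

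Second, I would specialize this inequality at $T = T^\star$, giving $\Lelbo^{\mbphi^\star}(\widehat\mbs_\mbtheta, T^\star) \geq \Lelbo(\widehat\mbs_\mbtheta, T^\star)$. Since $T^\star \in [0, T^\star]$, the choice $\tau = T^\star$ already satisfies the proposition's existence claim, discharging the statement as written.

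The substantive content beyond this boundary case is that the admissible set of $\tau$ typically extends strictly below $T^\star$. I would argue this informally as follows: by \cref{prop:score_error}, the optimal-score gap $\cG(\widehat\mbs_\mbtheta, \cdot)$ is non-decreasing in $T$, so pushing $\tau$ below $T^\star$ can only shrink the gap contribution; meanwhile \cref{prop:klless} ensures that the new KL term is no larger than its $\ps$-counterpart at the same $\tau$. Provided $T \mapsto \Lelbo^{\mbphi^\star}(\widehat\mbs_\mbtheta, T)$ is continuous (which holds under standard regularity of the forward SDE and of $\nu_\phi$), the set $\{\tau : \Lelbo^{\mbphi^\star}(\widehat\mbs_\mbtheta,\tau) \geq \Lelbo(\widehat\mbs_\mbtheta,T^\star)\}$ is a closed subset of $[0,T^\star]$ containing $T^\star$, and typically a whole interval $[\tau_{\min}, T^\star]$ with $\tau_{\min} < T^\star$ strictly. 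This is precisely the practical payoff advertised in \cref{sec:method}: the auxiliary bridge allows one to operate at a diffusion time smaller than the standard optimum while retaining at least as tight an ELBO.

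The main obstacle, if one wanted more than the bare existence statement, would be pinning down a quantitative $\tau_{\min}$; this requires controlling the rate at which $\KL{p(\mbx,T)}{\nu_{\mbphi^\star}(\mbx)}$ improves over $\KL{p(\mbx,T)}{\ps(\mbx)}$ as $T$ decreases, relative to the growth rate of $\cG(\widehat\mbs_\mbtheta, \cdot)$. The bounds in \cref{prop:kl_vanish} only control the $\ps$ version, so a quantitative extension would need assumptions on the expressiveness of the $\nu_\phi$ family. For the existence claim stated in the proposition, however, the pointwise domination inherited from \cref{prop:klless} combined with the single evaluation at $\tau = T^\star$ already suffices.
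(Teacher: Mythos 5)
Your proposal is correct and rests on the same key ingredient as the paper's proof, namely the pointwise domination $\Lelbo^{\mbphi^\star}(\widehat\mbs_\mbtheta, T) \geq \Lelbo(\widehat\mbs_\mbtheta, T)$ supplied by \cref{prop:klless}; your observation that $\tau = T^\star$ already witnesses the existence claim is in fact a slightly cleaner route, since the paper instead invokes continuity of $\Lelbo^{\mbphi}$ and an interval argument to reach the same conclusion. Your closing discussion of when $\tau$ can be pushed strictly below $T^\star$ mirrors the paper's remark on the degenerate case where the two \textsc{kl} terms coincide for all $T \leq T^\star$.
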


\Cref{prop_tstarstar} has two interpretations. On the one hand, given two score models optimally trained for their respective diffusion times, our approach guarantees an \gls{ELBO} that is at least as good as that of a standard diffusion model configured with its optimal time $T^\star$. Our method achieves this with a smaller diffusion time $\tau$, which offers sampling efficiency and generation quality. On the other hand, if we settle for an equivalent \gls{ELBO} for the standard diffusion model and our approach, with our method we can afford a sub-optimal score model, that requires a smaller computational budget to be trained, while guaranteeing shorter sampling times. We elaborate on this interpretation in \cref{sec:experiments}, where our approach obtains substantial savings in terms of training iterations.

A final note is in order. The choice of the auxiliary model depends on the selected diffusion time. 
The larger the $T$, the ``simpler'' the auxiliary model can be. 
Indeed, the noise distribution $p(\mbx,T)$ approaches $\ps$, so that a simple auxiliary model is sufficient to transform $\ps$ into a distribution $\nu_{\mbphi}$. 
Instead, for a small $T$, the distribution $p(\mbx,T)$ is closer to the data distribution. 
Then, the auxiliary model requires high flexibility and capacity.
In \cref{sec:experiments}, we substantiate this discussion with numerical examples and experiments on real data.

\subsection{An extension for density estimation}\label{sec:densityest}

Diffusion models can be also used for density estimation by transforming the diffusion \gls{SDE} into an equivalent \gls{ODE}
whose marginal distributions $p(\mbx,t)$
at each time instant coincide to that of the corresponding \gls{SDE} \cite{song2020score}.
The exact equivalent \gls{ODE} requires the score $\gradient\log p(\mbx_t,t)$, which in practice is replaced by the score model $\mbs_\mbtheta$, leading to the following \gls{ODE}
\begin{equation}\label{eq:odeflow}
	\dd\mbx_t=\left(\mbf(\mbx_t,t)-\frac{1}{2}g(t)^2\mbs_{\mbtheta}(\mbx_t,t)\right)\dd t\quad \text{with} \quad \mbx_0\sim \pdata\,,
\end{equation}
whose time varying probability density is indicated with $\widetilde{p}(\mbx,t)$.
Note that the density $\widetilde p(\mbx,t)$, is in general \textbf{not} equal to the density $p(\mbx,t)$ associated to \cref{eq:diffusion_sde}, with the exception of perfect score matching \cite{song2021maximum}. 
The reverse time process is modeled as a \gls{CNF} \cite{chen2018cnf, grathwohl2018scalable} initialized with distribution $\ps(\mbx)$; then, the likelihood of a given value $\mbx_0$ is
\begin{align}
	\log \widetilde{p}(\mbx_0)= \log\ps(\mbx_T)+\int\limits_{t=0}^{T}\div\left(\mbf(\mbx_t,t)-\frac{1}{2}g(t)^2\mbs_{\mbtheta}(\mbx_t,t)\right)\dd t.
\end{align}

For our proposed model, we also need to take into account this new \gls{ODE} dynamics.
We focus again on the term $\log\ps(\mbx_T)$ to improve the expected log likelihood. For consistency, our auxiliary density $\nu_{\mbphi}$ should now maximize $\mathbb{E}_{\sim \eqref{eq:odeflow}}\log\nu_{\mbphi}({\mbx_T})$ instead of $\mathbb{E}_{\sim \eqref{eq:diffusion_sde}}\log\nu_{\mbphi}({\mbx_T})$. However, the simulation of \cref{eq:odeflow} requires access to $\mbs_{\mbtheta}$ which, in the endeavor of density estimation, is available only once the score model has been trained.
Consequently optimization w.r.t. $\mbphi$ can only be performed sequentially, whereas for generative purposes it could be done concurrently.
While the sequential version is expected to perform better, experimental evidence indicates that improvements are marginal, justifying the adoption of the more efficient concurrent version.

\section{Experiments}\label{sec:experiments}
We now present numerical results on the \mnist and \cifar datasets, to support our claims in \cref{sec:elbo,sec:method}.
We follow a standard experimental setup \citep{song2021denoising,song2021maximum,huang2021variational,kingma2021variational}:
we use a standard U-Net architecture with time embeddings \citep{ho2020denoising} and we report the log-likelihood in terms of \gls{BPD} and the \gls{FID} scores (uniquely for \cifar). Although the \gls{FID} score is a standard metric for ranking generative models, caution should be used against over-interpreting \gls{FID} improvements \cite{Kynkaanniemi2022FID}. Similarly, while the theoretical properties of the models we consider are obtained through the lens of \gls{ELBO} maximization, the log-likelihood measured in terms of \gls{BPD} should be considered with care \cite{theis2016bpd}.
Finally, we also report the number of \gls{NFE} for computing the relevant metrics. We compare our method to the standard score-based model \cite{song2020score}.
Training and evaluation is performed on a small cluster with 16 NVIDIA V100 GPUs.
The full description on the experimental setup is presented in Appendix.

\begin{minipage}{.6\textwidth}
	\paragraph{On the existence of $T^\star$.}
	We look for further empirical evidence of the existence of a $T^\star<\infty$, as stated in \cref{theo_tstar}.
	For the moment, we shall focus on the baseline model \citep{song2020score}, where no auxiliary models are introduced.
	Results are reported in \cref{tab:existence_T_baseline}.
	For \mnist, we observe how times $T=0.6$ and $T=1.0$ have comparable performance in terms of \gls{BPD}, implying that any $T\geq1.0$ is at best unnecessary and generally detrimental.
	Similarly, for \cifar, it is possible to notice  that the best value of \gls{BPD} is achieved for $T=0.6$, outperforming all other values.
	\end{minipage}
\hfill
\begin{minipage}{.35\textwidth}
	\centering
	\captionof{table}{Optimal $T$ in \citep{song2020score}}
	\label{tab:existence_T_baseline}
	\scriptsize
\setlength\tabcolsep{4pt}
\setlength\extrarowheight{-4pt}
\begin{tabular}{rccccc}
	\toprule[1pt]
	\textbf{Dataset} & \textbf{Time $T$} & \textbf{\textsc{bpd}} ($\downarrow$) \\
	\midrule
	\mnist         & $1.0$             & $1.16$                               \\
	               & $0.6$             & $\mathbf{1.16}$                      \\
	               & $0.4$             & $1.25$                               \\
	               & $0.2$             & $1.75$                               \\
	\midrule
	\cifar         & $1.0$             & $3.09$                               \\
	               & $0.6$             & $\mathbf{3.07}$                      \\
	               & $0.4$             & $3.09$                               \\
	               & $0.2$             & $3.38$                               \\
	\bottomrule[1pt]
\end{tabular}

\end{minipage}

\paragraph{Our auxiliary models.}
In \cref{sec:method} we introduced an auxiliary model to minimize the mismatch between initial distributions of the backward process. We now specify the family of parametric distributions we have considered.
Clearly, the choice of an auxiliary model also depends on the data distribution, in addition to the choice of diffusion time $T$.

\begin{minipage}{.58\textwidth}
	For our experiments, we consider two auxiliary models: (i) a \gls{DPGMM} \citep{Rasmussen1999a,Gorur2010} for \mnist and (ii) Glow  \cite{kingma2018glow}, a highly flexible normalizing flow for \cifar.
	Both of them satisfy our requirements: they allow for exact likelihood computation and they are equipped with a simple sampling procedure.
	As discussed in \cref{sec:method}, auxiliary model complexity should be adjusted as a function of $T$.
	This is confirmed experimentally in \cref{fig:aux_complexity}, where we use the number of mixture components of the \gls{DPGMM} as a proxy to measure the complexity of the auxiliary model.
\end{minipage}\hfill
\begin{minipage}{.4\textwidth}
	\centering
	\includegraphics[width=\textwidth]{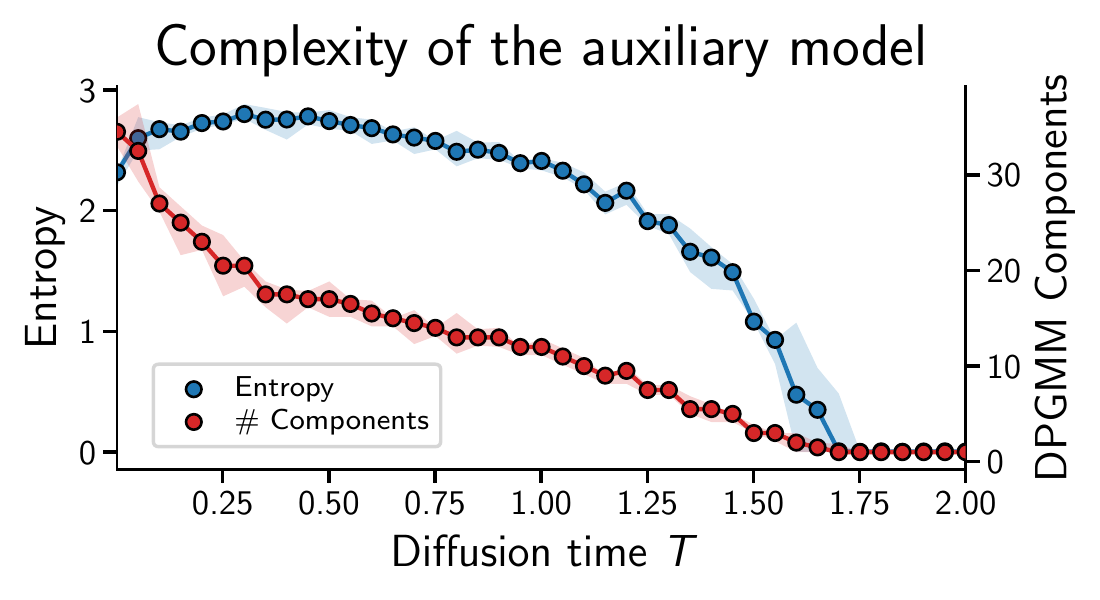}
	\captionof{figure}{Complexity of the auxiliary model as function of diffusion time (reported median and 95 quantiles on 4 random seeds).}
	\label{fig:aux_complexity}
\end{minipage}

\begin{figure}[t]
	\centering
	\begin{minipage}{0.56\textwidth}
		\centering
		\captionof{table}{Experimental results on \cifar, including other relevant baselines and sampling efficiency enhancements from the literature.
			}
		\label{tab:cifar10}
		
\setlength{\tabcolsep}{4pt}
\scriptsize
\begin{tabular}{r|rrrr}
	\toprule[1pt]
	                                                          & \textbf{\textsc{fid}}$(\downarrow)$ & \textbf{\textsc{bpd}} $(\downarrow)$ & \textbf{\textsc{nfe}} $(\downarrow)$ & \textbf{\textsc{nfe}} $(\downarrow)$ \\
	\textbf{Model}                                            &                                     &                                      & (\gls{SDE})\;\;\;\;                  & (\gls{ODE})\;\;\;\;                  \\
	\midrule
	ScoreSDE \cite{song2020score}                             & $3.64 $                             & $3.09$                               & $1000$    & $221$                                                                  \\
	\midrule
	\rowcolor{Gray!20} ScoreSDE  ($T=0.6$)                    & $5.74$                              & $3.07$                               & $600 $                               & $200$                                \\
	\rowcolor{Gray!40} ScoreSDE  ($T=0.4$)                    & $24.91$                             & $3.09$                               & $400 $                               & $187$                                \\
	\rowcolor{Gray!60} ScoreSDE  ($T=0.2$)                    & $339.72$                            & $3.38$                               & $200 $                               & $176$                                \\
	\midrule
	\rowcolor{Gray!20} \textbf{Our} ($T=0.6$)             & $3.72$                              & $3.07$                               & $600 $                               & $200$                                \\
	\rowcolor{Gray!40} \textbf{Our} ($T=0.4$)             & $5.44$                              & $3.06$                               & $400 $                               & $187$                                \\
	\rowcolor{Gray!60} \textbf{Our} ($T=0.2$)             & $14.38$                             & $3.06$                               & $200 $                               & $176$    
	\\
	\midrule
ARDM \citep{hoogeboom2022autoregressive}    & $-$                             & $ 2.69 $                                & $3072 $                                                                      \\
	
	VDM\cite{kingma2021variational}                           & $4.0 $                             & $ 2.49 $                             & $1000$                                                                      \\
	D3PMs \cite{austin2021structured}                         & $7.34 $                             & $ 3.43 $                                & $1000$                                                                      \\
	DDPM \cite{ho2020denoising}                               & $3.21 $                             & $3.75$                               & $1000$                                                                      \\
	\midrule
	Gotta Go Fast \cite{jolicoeur2021gotta}                   & $2.44 $                             & $ - $                                & $180 $                                                                      \\
	LSGM \citep{vahdat2021score}                                           & $2.10 $                             & $2.87$                               & $120/138 $                                                                      \\
	ARDM-P \citep{hoogeboom2022autoregressive}    & $-$                             & $ 2.68/2.74 $                                & $200/50 $                                                                      \\
	
	\bottomrule[1pt]
\end{tabular}
\\[2ex]
		\begin{minipage}{\textwidth}
			\centering
			\scriptsize
\begin{tabular}{ccc}
	
	Real data                                                                          & ScoreSDE ($T=0.4$) & \textbf{Our} ($T=0.4$) \\
	\includegraphics[width=.3\textwidth]{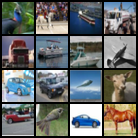}           &
	\includegraphics[width=.3\textwidth]{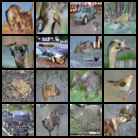} &
	\includegraphics[width=.3\textwidth]{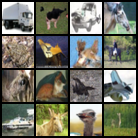}
\end{tabular}

			\captionof{figure}{Visualization of some samples on \cifar.}
			\label{fig:cifarsamples}
		\end{minipage}
	\end{minipage}\hfill
	\begin{minipage}{0.4\textwidth}
		\centering
		\begin{minipage}{\textwidth}
			\centering
			\captionof{table}{Experiment results on \mnist. For our method, (S) is for the extension in \cref{sec:densityest}}
			\label{tab:mnist}
			\scriptsize
\setlength\tabcolsep{3pt}
\setlength\extrarowheight{-4pt}
\begin{tabular}{r|clccc}
	\toprule[1pt]
	                                         & \textbf{\textsc{nfe}}($\downarrow$) & \multicolumn{2}{c}{\textbf{\textsc{bpd}} ($\downarrow$)}                \\
	\textbf{Model}                           & (\gls{ODE})\;\;\;\;                 &                                                          &              \\
	\midrule
	ScoreSDE                                 & $300$                               & $1.16$                                                   &              \\

	\midrule

	ScoreSDE ($T=0.6$)                       & $258$                               & $1.16$                                                   &              \\
	\rowcolor{Gray!20}\textbf{Our} ($T=0.6$) & $258$                               & $1.16$                                               & $1.14$ (S)   \\

	\midrule

	ScoreSDE ($T=0.4$)                       & $235$                               & $1.25$                                                   &              \\
	\rowcolor{Gray!20}\textbf{Our} ($T=0.4$) & $235$                               & $1.17$                                               & $1.16$ (S)   \\

	\midrule

	ScoreSDE  ($T=0.2$)                      & $191$                               & $1.75$                                                   &              \\
	\rowcolor{Gray!20}\textbf{Our} ($T=0.2$) & $191$                               & $1.33$                                               & $1.31$   (S) \\
	\bottomrule[1pt]
\end{tabular}

		\end{minipage}\\[1ex]
		\begin{minipage}{\textwidth}
			\includegraphics[width=\textwidth]{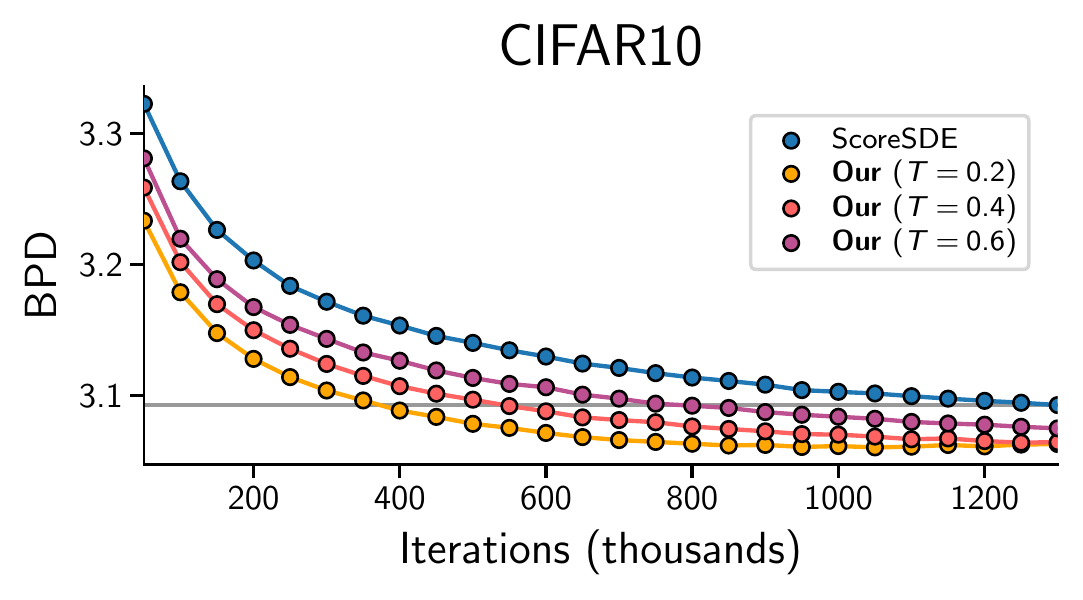}
			\captionof{figure}{Training curves of score models for different diffusion time $T$, recorded during the span of $1.3$ millions iterations.}
			\label{fig:cifar10bpd}
		\end{minipage}
	\end{minipage}
\end{figure}

\begin{minipage}{.55\textwidth}
	\paragraph{Reducing $T$ with auxiliary models.}
	We now turn attention to \cref{prop_tstarstar}, according to which it is possible to set a preferable (in terms of efficiency) $\tau\leq T^\star$, and obtain a comparable (or better) performance than the baseline model that uses $T^\star$.
	For \mnist, setting $\tau=0.4$ produces good performance both in terms of \gls{BPD} (\cref{tab:mnist}) and visual sample quality (\cref{fig:mnistsamples}).
	We also consider the sequential extension to compute the likelihood, but remark marginal improvements compared to a concurrent implementation.
	Similarly, in \cref{tab:cifar10} we observe how our method achieves better \gls{BPD} than the baseline, using $\tau=0.6$.
	Moreover, our approach outperforms the baselines in terms of \gls{FID} score (additional non-curated samples in the Appendix).
\end{minipage}\hfill
\begin{minipage}{.4\textwidth}
	\captionof{figure}{Visualization of some samples}
	\label{fig:mnistsamples}
	\scriptsize
\begin{tabular}{cc}
	\toprule[1pt]
	ScoreSDE \cite{song2020score} ($T = 1$, \gls{BPD} $=1.16$)                       \\
	\includegraphics[width=.9\textwidth]{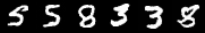}  \\
	\midrule[1pt]
	ScoreSDE \cite{song2020score} ($T = 0.4$, \gls{BPD} $=1.25$)                     \\
	\includegraphics[width=.9\textwidth]{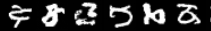} \\
	\midrule[1pt]
	\textbf{Our} ($T = 0.4$, \gls{BPD} $=1.17$)                                      \\
	\includegraphics[width=.9\textwidth]{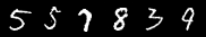}      \\
	\bottomrule[1pt]
\end{tabular}

\end{minipage}

\paragraph{Training and sampling efficiency}
In \cref{fig:cifar10bpd} the horizontal line corresponds to the best performance of a fully trained baseline model \cite{song2020score}.
To achieve the same performance of the baseline, variants of our method require fewer iterations, which translate in training efficiency.
For the sake of fairness, the total training cost of our method should account for the auxiliary model training, which however can be done concurrently to the diffusion process.
As an illustration for \cifar, using four GPUs, the baseline model requires $\sim 6.4$ days of training. With our method we trained the auxiliary and diffusion models for $\sim 2.3$ and $2$ days respectively, leading to a total training time of $\max\{2.3,2\}=2.3$ days.
Similar training curves can be obtained for the \mnist dataset, where the training time for \gls{DPGMM} is negligible.
Sampling speed benefits are evident from \cref{tab:mnist,tab:cifar10}.
When considering the \gls{SDE} version of the methods the number of sampling steps can decrease linearly with $T$, in accordance to theory \cite{Kloeden1989}, while retaining good \gls{BPD} and \gls{FID} scores.
Similarly, although not in a linear fashion, the number of steps of the \gls{ODE} samplers can be reduced by using a smaller diffusion time $T$.

\section{Conclusion}\label{sec:conclusion}

Diffusion-based generative models emerged as an extremely competitive approach for a wide range of application domains.
In practice, however, these models are resource hungry, both for their training and for sampling new data points.
In this work, we have introduced the key idea of considering diffusion times $T$ as a free variable which should be chosen appropriately.
We have shown that the choice of $T$ introduces a trade-off, for which an optimal ``sweet spot'' exists. 
In standard diffusion-based models, smaller values of $T$ are preferable for efficiency reasons, but sufficiently large $T$ are required to reduce approximation errors of the forward dynamics.
Thus, we devised a novel method that allows for an arbitrary selection of diffusion times, where even small values are allowed.
Our method closes the gap between practical and ideal diffusion dynamics, using an auxiliary model.
Our empirical validation indicated that the performance of our approach was comparable and often superior to standard diffusion models, while being efficient both in training and sampling.

\paragraph{Limitations and ethical concerns.}
In this work, the experimental protocol has been defined to corroborate our methodological contribution, and not to achieve state-of-the-art performance.
A more extensive empirical evaluation of model architectures, sampling methods, and additional datasets could benefit practitioners in selecting an appropriate configuration of our method. 
An additional limitation is the descriptive, and not prescriptive, nature of \cref{theo_tstar}: we know that $T^\star$ exists, but an explicit expression to identify the optimal diffusion is out of reach.
Finally, we inherit the same ethical concerns of all generative models, as they could be used to produce fake or misleading information to the public.

\begin{ack}
	MF gratefully acknowledges support from the AXA Research Fund and from the Agence Nationale de la
	Recherche (grant ANR-18-CE46-0002 and ANR-19-P3IA-0002).
\end{ack}

{
\small
\bibliographystyle{abbrv}

\begin{thebibliography}{10}

\bibitem{anderson}
B.~D. Anderson.
\newblock {Reverse-Time Diffusion Equation Models}.
\newblock {\em Stochastic Processes and their Applications}, 12(3):313--326,
  1982.

\bibitem{austin2021structured}
J.~Austin, D.~D. Johnson, J.~Ho, D.~Tarlow, and R.~van~den Berg.
\newblock Structured denoising diffusion models in discrete state-spaces.
\newblock In {\em Advances in Neural Information Processing Systems},
  volume~34, pages 17981--17993. Curran Associates, Inc., 2021.

\bibitem{chen2018cnf}
R.~T.~Q. Chen, Y.~Rubanova, J.~Bettencourt, and D.~K. Duvenaud.
\newblock {Neural Ordinary Differential Equations}.
\newblock In {\em Advances in Neural Information Processing Systems},
  volume~31. Curran Associates, Inc., 2018.

\bibitem{de2021diffusion}
V.~De~Bortoli, J.~Thornton, J.~Heng, and A.~Doucet.
\newblock {Diffusion Schr\"{o}dinger Bridge with Applications to Score-Based
  Generative Modeling}.
\newblock In {\em Advances in Neural Information Processing Systems},
  volume~34, pages 17695--17709. Curran Associates, Inc., 2021.

\bibitem{dhariwal2021diffusion}
P.~Dhariwal and A.~Nichol.
\newblock {Diffusion Models Beat GANs on Image Synthesis}.
\newblock In {\em Advances in Neural Information Processing Systems},
  volume~34, pages 8780--8794. Curran Associates, Inc., 2021.

\bibitem{dockhorn2022score}
T.~Dockhorn, A.~Vahdat, and K.~Kreis.
\newblock {Score-Based Generative Modeling with Critically-Damped Langevin
  Diffusion}.
\newblock In {\em International Conference on Learning Representations}, 2022.

\bibitem{Goodfellow2014}
I.~Goodfellow, J.~Pouget-Abadie, M.~Mirza, B.~Xu, D.~Warde-Farley, S.~Ozair,
  A.~Courville, and Y.~Bengio.
\newblock {Generative Adversarial Nets}.
\newblock In {\em Advances in Neural Information Processing Systems},
  volume~27. Curran Associates, Inc., 2014.

\bibitem{Gorur2010}
D.~G{\"o}r{\"u}r and C.~Edward~Rasmussen.
\newblock {Dirichlet Process Gaussian Mixture Models: Choice of the Base
  Distribution}.
\newblock {\em Journal of Computer Science and Technology}, 25(4):653--664,
  2010.

\bibitem{grathwohl2018scalable}
W.~Grathwohl, R.~T.~Q. Chen, J.~Bettencourt, and D.~Duvenaud.
\newblock {Scalable Reversible Generative Models with Free-form Continuous
  Dynamics}.
\newblock In {\em International Conference on Learning Representations}, 2019.

\bibitem{ho2020denoising}
J.~Ho, A.~Jain, and P.~Abbeel.
\newblock {Denoising Diffusion Probabilistic Models}.
\newblock In {\em Advances in Neural Information Processing Systems},
  volume~33, pages 6840--6851. Curran Associates, Inc., 2020.

\bibitem{hoogeboom2022autoregressive}
E.~Hoogeboom, A.~A. Gritsenko, J.~Bastings, B.~Poole, R.~van~den Berg, and
  T.~Salimans.
\newblock {Autoregressive Diffusion Models}.
\newblock In {\em International Conference on Learning Representations}, 2022.

\bibitem{huang2021variational}
C.-W. Huang, J.~H. Lim, and A.~C. Courville.
\newblock {A Variational Perspective on Diffusion-Based Generative Models and
  Score Matching}.
\newblock In {\em Advances in Neural Information Processing Systems},
  volume~34, pages 22863--22876. Curran Associates, Inc., 2021.

\bibitem{jolicoeur2021gotta}
A.~Jolicoeur{-}Martineau, K.~Li, R.~Pich{\'{e}}{-}Taillefer, T.~Kachman, and
  I.~Mitliagkas.
\newblock {Gotta Go Fast When Generating Data with Score-Based Models}.
\newblock {\em CoRR}, abs/2105.14080, 2021.

\bibitem{kingma2021variational}
D.~Kingma, T.~Salimans, B.~Poole, and J.~Ho.
\newblock {Variational Diffusion Models}.
\newblock In {\em Advances in Neural Information Processing Systems},
  volume~34, pages 21696--21707. Curran Associates, Inc., 2021.

\bibitem{kingma2018glow}
D.~P. Kingma and P.~Dhariwal.
\newblock {Glow: Generative Flow with Invertible 1x1 Convolutions}.
\newblock In {\em Advances in Neural Information Processing Systems},
  volume~31. Curran Associates, Inc., 2018.

\bibitem{Kingma2016}
D.~P. Kingma, T.~Salimans, R.~Jozefowicz, X.~Chen, I.~Sutskever, and
  M.~Welling.
\newblock {Improved Variational Inference with Inverse Autoregressive Flow}.
\newblock In {\em Advances in Neural Information Processing Systems 29}, pages
  4743--4751. Curran Associates, Inc., 2016.

\bibitem{Kingma14}
D.~P. Kingma and M.~Welling.
\newblock {Auto-Encoding Variational Bayes}.
\newblock In {\em International Conference on Learning Representations}, 2014.

\bibitem{Kloeden1989}
P.~E. Kloeden and E.~Platen.
\newblock {A Survey of Numerical Methods for Stochastic Differential
  Equations}.
\newblock {\em Stochastic Hydrology and Hydraulics}, 3(3):155--178, 1989.

\bibitem{kong2021diffwave}
Z.~Kong, W.~Ping, J.~Huang, K.~Zhao, and B.~Catanzaro.
\newblock {DiffWave: A Versatile Diffusion Model for Audio Synthesis}.
\newblock In {\em International Conference on Learning Representations}, 2021.

\bibitem{Kynkaanniemi2022FID}
T.~Kynk{\"{a}}{\"{a}}nniemi, T.~Karras, M.~Aittala, T.~Aila, and J.~Lehtinen.
\newblock {The Role of ImageNet Classes in Fr{\'{e}}chet Inception Distance}.
\newblock {\em CoRR}, abs/2203.06026, 2022.

\bibitem{lee2022priorgrad}
S.-g. Lee, H.~Kim, C.~Shin, X.~Tan, C.~Liu, Q.~Meng, T.~Qin, W.~Chen, S.~Yoon,
  and T.-Y. Liu.
\newblock {PriorGrad: Improving Conditional Denoising Diffusion Models with
  Data-Dependent Adaptive Prior}.
\newblock In {\em International Conference on Learning Representations}, 2022.

\bibitem{nichol2021icml}
A.~Q. Nichol and P.~Dhariwal.
\newblock {Improved Denoising Diffusion Probabilistic Models}.
\newblock In {\em International Conference on Machine Learning}, volume 139,
  pages 8162--8171. {PMLR}, 2021.

\bibitem{Rasmussen1999a}
C.~Rasmussen.
\newblock {The Infinite Gaussian Mixture Model}.
\newblock In S.~Solla, T.~Leen, and K.~M\"{u}ller, editors, {\em Advances in
  Neural Information Processing Systems}, volume~12. MIT Press, 1999.

\bibitem{salimans2022progressive}
T.~Salimans and J.~Ho.
\newblock {Progressive Distillation for Fast Sampling of Diffusion Models}.
\newblock In {\em International Conference on Learning Representations}, 2022.

\bibitem{sohl2015deep}
J.~Sohl-Dickstein, E.~Weiss, N.~Maheswaranathan, and S.~Ganguli.
\newblock Deep unsupervised learning using nonequilibrium thermodynamics.
\newblock In {\em International Conference on Machine Learning}, pages
  2256--2265. PMLR, 2015.

\bibitem{song2021denoising}
J.~Song, C.~Meng, and S.~Ermon.
\newblock {Denoising Diffusion Implicit Models}.
\newblock In {\em International Conference on Learning Representations}, 2021.

\bibitem{song2021maximum}
Y.~Song, C.~Durkan, I.~Murray, and S.~Ermon.
\newblock {Maximum Likelihood Training of Score-Based Diffusion Models}.
\newblock In {\em Advances in Neural Information Processing Systems},
  volume~34, pages 1415--1428. Curran Associates, Inc., 2021.

\bibitem{song2019generative}
Y.~Song and S.~Ermon.
\newblock {Generative Modeling by Estimating Gradients of the Data
  Distribution}.
\newblock In {\em Advances in Neural Information Processing Systems},
  volume~32. Curran Associates, Inc., 2019.

\bibitem{song2020score}
Y.~Song, J.~Sohl-Dickstein, D.~P. Kingma, A.~Kumar, S.~Ermon, and B.~Poole.
\newblock {Score-Based Generative Modeling through Stochastic Differential
  Equations}.
\newblock In {\em International Conference on Learning Representations}, 2021.

\bibitem{Sarkka2019}
S.~Särkkä and A.~Solin.
\newblock {\em {Applied Stochastic Differential Equations}}.
\newblock Institute of Mathematical Statistics Textbooks. Cambridge University
  Press, 2019.

\bibitem{Yusuke2021}
Y.~Tashiro, J.~Song, Y.~Song, and S.~Ermon.
\newblock {CSDI: Conditional Score-based Diffusion Models for Probabilistic
  Time Series Imputation}.
\newblock In {\em Advances in Neural Information Processing Systems},
  volume~34, pages 24804--24816. Curran Associates, Inc., 2021.

\bibitem{theis2016bpd}
L.~Theis, A.~van~den Oord, and M.~Bethge.
\newblock {A Note on the Evaluation of Generative Models}.
\newblock In Y.~Bengio and Y.~LeCun, editors, {\em International Conference on
  Learning Representations}, 2016.

\bibitem{Tran2021}
B.-H. Tran, S.~Rossi, D.~Milios, P.~Michiardi, E.~V. Bonilla, and M.~Filippone.
\newblock Model selection for bayesian autoencoders.
\newblock In {\em Advances in Neural Information Processing Systems},
  volume~34, pages 19730--19742. Curran Associates, Inc., 2021.

\bibitem{vahdat2021score}
A.~Vahdat, K.~Kreis, and J.~Kautz.
\newblock {Score-based Generative Modeling in Latent Space}.
\newblock In {\em Advances in Neural Information Processing Systems},
  volume~34, pages 11287--11302. Curran Associates, Inc., 2021.

\bibitem{villani2009optimal}
C.~Villani.
\newblock {\em Optimal transport: old and new}, volume 338.
\newblock Springer, 2009.

\bibitem{watson2021}
D.~Watson, J.~Ho, M.~Norouzi, and W.~Chan.
\newblock {Learning to Efficiently Sample from Diffusion Probabilistic Models}.
\newblock {\em CoRR}, abs/2106.03802, 2021.

\bibitem{xiao2022DDGAN}
Z.~Xiao, K.~Kreis, and A.~Vahdat.
\newblock {Tackling the Generative Learning Trilemma with Denoising Diffusion
  {GAN}s}.
\newblock In {\em International Conference on Learning Representations}, 2022.

\bibitem{zheng2022truncated}
H.~Zheng, P.~He, W.~Chen, and M.~Zhou.
\newblock Truncated diffusion probabilistic models.
\newblock {\em CoRR}, abs/2202.09671, 2022.

\end{thebibliography}

}

\clearpage

\clearpage
\appendix

\section{Generic definitions and assumptions}

Our work builds upon the work in \cite{song2021maximum}, which should be considered as a basis for the developments hereafter. In this supplementary material we use the following shortened notation for a generic $\omega>0$:
\begin{equation}
	\Gauss{\omega}(\mbx)\defeq \mathcal{N}(\mbx;\mbzero,\omega\mbI).
\end{equation}

It is useful to notice that $\nabla\log(\Gauss{\omega}(\mbx))=-\frac{1}{\omega}\mbx$.

For an arbitrary probability density $p(\mbx)$ we define the convolution ($*$ operator) with $\Gauss{\omega}$ using notation
\begin{equation}
	p_\omega(\mbx)=p(\mbx)*\Gauss{\omega}(\mbx).
\end{equation}
Equivalently, $p_\omega(\mbx)=\exp(\frac{\omega}{2} \Delta)p(\mbx)$, and consequently $\frac{d p_\omega(\mbx)}{d\omega}=\frac{1}{2}\Delta p(\mbx)$, where $\Delta=\nabla^\top\nabla$. Notice that by considering the Dirac delta function $\delta(\mbx)$, we have the equality $\delta_\omega(\mbx)=\Gauss{\omega}(\mbx)$.

In the following derivations, we make use of the Stam–Gross logarithmic Sobolev inequality result in \citep[p.~562 Example 21.3]{villani2009optimal}:
\begin{equation}
	\KL{p(\mbx)}{\Gauss{\omega}(\mbx)}=\int p(\mbx)\log\left(\frac{p(\mbx)}{\Gauss{\omega}(\mbx)}\right)\dd\mbx\leq \frac{\omega}{2}\int \norm{\nabla\left(\log \frac{p(\mbx)}{ \Gauss{\omega}(\mbx)}\right)}^2 p(\mbx)\dd\mbx.
\end{equation}

\section{Proof of \cref{eq:I_ineq}}
We prove the following result
\begin{equation*}
	I(\mbs_{\mbtheta},T)\geq \underbrace{I(\gradient\log p,T)}_{\defeq K(T)}=\frac{1}{2}\int\limits_{t=0}^{T}g^2(t)\mathbb{E}_{\sim\eqref{eq:diffusion_sde}}\left[
		\norm{\gradient\log p(\mbx_t,t)-\gradient\log p(\mbx_t,t\g\mbx_0)}
		\right]^2\dd t.
\end{equation*}

\begin{proof}
	We prove that for generic positive $\lambda(\cdot)$, and $T_2>T_1$ the following holds:
	\begin{flalign}\label{eq:I_ineq_supp}
		& \int\limits_{t=T_1}^{T_2}\lambda(t)\mathrm{E}_{\sim\eqref{eq:diffusion_sde}}\left[||\mbs(\mbx_t,t)-\nabla\log p(\mbx_t,t|\mbx_0)||^2\right]\dd t\geq\int\limits_{t=T_1}^{T_2}\lambda(t)\mathrm{E}_{\sim\eqref{eq:diffusion_sde}}\left[||\nabla\log p(\mbx_t,t)-\nabla\log p(\mbx_t,t|\mbx_0)||^2\right]\dd t.
	\end{flalign}
	First we compute the functional derivative (w.r.t $\mbs$)
	\begin{flalign*}
		&\frac{\delta}{\delta \mbs}\int\limits_{t=T_1}^{T_2}\lambda(t)\mathrm{E}_{\sim\eqref{eq:diffusion_sde}}\left[||\mbs(\mbx_t,t)-\nabla\log p(\mbx_t,t|\mbx_0)||^2\right]\dd t=2\int\limits_{t=T_1}^{T_2}\lambda(t)\mathrm{E}_{\sim\eqref{eq:diffusion_sde}}\left[(\mbs(\mbx_t,t)-\nabla\log p(\mbx_t,t|\mbx_0))\right]\dd t=\\
		&2\int\limits_{t=T_1}^{T_2}\lambda(t)\mathrm{E}_{\sim\eqref{eq:diffusion_sde}}\left[(\mbs(\mbx_t,t)-\nabla\log p(\mbx_t,t))\right]\dd t,\end{flalign*}
	where we used
	\begin{flalign*}
		&\mathrm{E}_{\sim\eqref{eq:diffusion_sde}}\left[\nabla\log p(\mbx_t,t|\mbx_0)\right]=  \int\nabla\log p(\mbx,t|\mbx_0)p(\mbx,t|\mbx_0)\pdata(\mbx_0)\dd\mbx\dd\mbx_0 =\\&
		\int\nabla p(\mbx,t|\mbx_0)\pdata(\mbx_0)\dd\mbx\dd\mbx_0=\int\nabla p(\mbx,t)\dd\mbx=\mathrm{E}_{\sim\eqref{eq:diffusion_sde}}\left[\nabla\log p(\mbx_t,t)\right].
	\end{flalign*}
	Consequently we can obtain the optimal $\mbs$ through
	\begin{equation}
		\frac{\delta}{\delta \mbs}\int\limits_{t=T_1}^{T_2}\lambda(t)\mathrm{E}_{\sim\eqref{eq:diffusion_sde}}\left[||\mbs(\mbx_t,t)-\nabla\log p(\mbx_t,t|\mbx_0)||^2\right]\dd t=0\rightarrow \mbs(\mbx,t)=\nabla \log p(\mbx,t).
	\end{equation}
	Substitution of this result into \cref{eq:I_ineq_supp} directly proves the desired inequality.

	As a byproduct, we prove the correctness of \cref{eq:I_ineq}, since it is a particular case of \cref{eq:I_ineq_supp}, with $\lambda=g^2,T_1=0,T_2=T$. Since $K(T)$ is a minimum, the decomposition $I(\mbs_{\mbtheta},T)=K(T)+\cG(\mbs_{\mbtheta},T)$ implies
	$K(T)+\cG(\mbs_{\mbtheta},T)\geq K(T)\rightarrow \cG(\mbs_{\mbtheta},T)\geq 0$.
\end{proof}
\section{Proof of \cref{prop_elbo}}
\propelbo*

\begin{proof}

	We consider the pair of equations
	\begin{flalign}\label{eq:diffusionproof}
		&\dd\mbx_t=\left[-\mbf(\mbx_t,t')+g^2(t')\gradient\log q(\mbx_t,t)\right]\dd t +g(t')\dd\mbw(t),\quad \nonumber\\
		&\dd\mbx_t= \mbf(\mbx_t, t) \dd t+g(t)\dd\mbw(t),
	\end{flalign}
	where $t'=T-t$, $q$ is the density of the backward process and $p$ is the density of the forward process. These equations can be interpreted as a particular case of the following pair of  \gls{SDE}s (corresponding to \cite{huang2021variational} eqn (4) and (17)\footnote{Notice that our notation for the roles of $p,q$ is swapped w.r.t. \cite{huang2021variational}}).
	\begin{flalign}\label{eq:huangdiffusionproof}
		&\dd\mbx_t=\underbrace{\left[-\mbf(\mbx_t,t')+g^2(t')\gradient\log q(\mbx_t,t)\right]}_{\mbmu(\mbx_t,t)}\dd t +\underbrace{g(t')}_{\sigma(t)}\dd\mbw(t),\quad \nonumber\\
		&\dd\mbx_t= \left[\underbrace{\mbf(\mbx_t, t)-g^2(t)\gradient\log q(\mbx_t,t')}_{-\mbmu(\mbx_t,t')}+\underbrace{g(t)}_{\sigma(t')}\mba(\mbx_t,t)\right]\dd t+g(t)\dd\mbw(t),
	\end{flalign}
	where \cref{eq:diffusionproof} is recovered considering $\mba(\mbx,t)=\sigma(t')\gradient\log q(\mbx,t')=g(t)\gradient\log q(\mbx,t')$. \cref{eq:huangdiffusionproof} is associated to an \gls{ELBO} (\cite{huang2021variational}, Thm 3) that is attained with equality if and only if $\mba(\mbx,t)=\sigma(t')\gradient\log q(\mbx,t')$. Consequently, we can write the following equality associated to the backward process of \cref{eq:diffusionproof}
	\begin{flalign}\label{eq:elboproof}
		\log q(\mbx,T)=\mathbb{E}\left[-\frac{1}{2}\int\limits_{0}^T||\mba(\mbx_t,t)||^2+2\nabla^\top \mbmu(\mbx_t,t') ds+\log q(\mbx_T,0)  \quad \Big\vert\mbx_0=\mbx\right],
	\end{flalign}
	where expected value is taken w.r.t. dynamics of the associated forward process.
	
	By careful inspection of the couple of equations we notice that in the process $\mbx_t$ the drift includes the $\gradient\log q(\mbx_t,t)$ term, while in our main \eqref{eq:diffusion_sde} we have $\gradient\log p(\mbx_t,t')$. In general the two vector fields do not agree. However, if we select as starting distribution of the generating process $p(\mbx,T)$, i.e. $q(\mbx,0)=p(\mbx,T)$, then $\forall t, q(\mbx,t)=p(\mbx,t')$.

	Given initial conditions, the time evolution of the density $p$ is fully described by the Fokker-Planck equation
	\begin{equation}
		\frac{d}{\dd t}p(\mbx,t)=-\nabla^\top\left( \mbf(\mbx, t)p(\mbx,t)\right)+
		\frac{g^2(t)}{2}\Delta(p(\mbx,t)),\quad
		p(\mbx,0)=\pdata(\mbx).
	\end{equation}
	Similarly, for the density $q$,
	\begin{equation}
		\frac{d}{\dd t}q(\mbx,t)=-\nabla^\top\left( -\mbf(\mbx, t')q(\mbx,t)+g^2(t')\gradient\log q(\mbx,t)q(\mbx,t)\right)+
		\frac{g^2(t')}{2}\Delta(q(\mbx,t)),\quad
		q(\mbx,0)=p(\mbx,T).
	\end{equation}

	By Taylor expansion we have
	\begin{flalign*}
		& q(\mbx,\delta t)=q(\mbx,0)+\delta t\left(\frac{d}{\dd t}q(\mbx,t)\right)_{t=0}+\mathcal{O}(\delta t^2) =\\
		&q(\mbx,0)+\delta t\left(-\nabla^\top\left( -\mbf(\mbx, T)q(\mbx,0)+g^2(T)\gradient\log q(\mbx,0)q(\mbx,0)\right)
		+\frac{g^2(T)}{2}\Delta(q(\mbx,0))\right)+\mathcal{O}(\delta t^2)=\\
		&q(\mbx,0)+\delta t\left(\nabla^\top\left( \mbf(\mbx, T)q(\mbx,0)\right)
		-\frac{g^2(T)}{2}\Delta(q(\mbx,0))\right)+\mathcal{O}(\delta t^2),
	\end{flalign*}
	and
	\begin{flalign*}
		& p(\mbx,T-\delta t)=p(\mbx,T)-\delta t\left(\frac{d}{\dd t}p(\mbx,t)\right)_{t=T}+\mathcal{O}(\delta t^2) =\\
		&p(\mbx,T)-\delta t\left(-\nabla^\top\left( \mbf(\mbx, T)p(\mbx,T)\right)
		+\frac{g^2(T)}{2}\Delta(p(\mbx,T))\right)+\mathcal{O}(\delta t^2)=\\
		&p(\mbx,T)+\delta t\left(\nabla^\top\left( \mbf(\mbx, T)p(\mbx,T)\right)
		-\frac{g^2(T)}{2}\Delta(p(\mbx,T))\right)+\mathcal{O}(\delta t^2)
	\end{flalign*}
	Since $q(\mbx,0)=p(\mbx,T)$, we finally have $q(\mbx,\delta t)-p(\mbx,T-\delta t)=\mathcal{O}(\delta t^2)$. This holds for arbitrarily small $\delta t$. By induction, with similar reasoning, we claim that $q(\mbx,t)=p(\mbx,t')$.

	This last result allows us to rewrite \cref{eq:diffusionproof} as the pair of  \gls{SDE}s
	\begin{flalign}
		&\dd\mbx_t=\left[-\mbf(\mbx_t,t')+g^2(t')\gradient\log p(\mbx_t,t')\right]\dd t +g(t')\dd\mbw(t),\quad \nonumber\\
		&\dd\mbx_t= \mbf(\mbx_t, t) \dd t+g(t)\dd\mbw(t).
	\end{flalign}
	Moreover, since $q(\mbx,T)=p(\mbx,0)=\pdata(\mbx)$, together with the result \cref{eq:elboproof}, we have the following equality
	\begin{equation}
		\log\pdata(\mbx)=\mathbb{E}\left[-\frac{1}{2}\int\limits_{0}^T||\mba(\mbx_t,t)||^2+2\nabla^\top \mbmu(\mbx_t,t') \dd t+\log p(\mbx_T,T)  \quad \vert\mbx_0=\mbx\right].
	\end{equation}

	Consequently
	\begin{flalign*}
		&\mathbb{E}_{\mbx\sim\pdata}\left[\log\pdata(\mbx)\right]=   \mathbb{E}\left[\log p(\mbx_T,T)\right]   +\mathbb{E}\left[-\frac{1}{2}\int\limits_{0}^T||\mba(\mbx_t,t)||^2+2\nabla^\top \mbmu(\mbx_t,t') \dd t  \right]=\\
		&\mathbb{E}\left[\log p(\mbx_T,T)\right] +\mathbb{E}\left[-\frac{1}{2}\int\limits_{0}^Tg(t)^2||\gradient\log p(\mbx_t,t)||^2+2\nabla^\top\left(-\mbf(\mbx_t,t)+g^2(t)\gradient\log p(\mbx_t,t)\right) \dd t  \right]=\\
		&\mathbb{E}\left[\log p(\mbx_T,T)\right] +\mathbb{E}\left[-\frac{1}{2}\int\limits_{0}^Tg(t)^2||\gradient\log p(\mbx_t,t)||^2-2g^2(t)\gradient^\top_\mbx\log p(\mbx_t,t)\gradient\log p(\mbx_t,t|\mbx_0)  \dd t  \right]\\&+\mathbb{E}\left[-\frac{1}{2}\int\limits_{0}^T2\mbf^\top(\mbx_t,t)\gradient\log p(\mbx_t,t|\mbx_0)  \dd t  \right]=\\
		&\mathbb{E}\left[\log p(\mbx_T,T)\right] +
		\mathbb{E}\left[-\frac{1}{2}\int\limits_{0}^Tg(t)^2||\gradient\log p(\mbx_t,t)-\gradient\log p(\mbx_t,t|\mbx_0)||^2\dd t\right]+\\&\mathbb{E}\left[-\frac{1}{2}\int\limits_{0}^T-g(t)^2||\gradient\log p(\mbx_t,t|\mbx_0)||^2+2\mbf^\top(\mbx_t,t)\gradient\log p(\mbx_t,t|\mbx_0)  \dd t  \right].
	\end{flalign*}
	Remembering the definitions
	\begin{flalign*}
		&K(T)=\frac{1}{2}\int\limits_{t=0}^{T}g^2(t)\mathbb{E}_{\sim\eqref{eq:diffusion_sde}}\left[||\nabla\log p(\mbx_t,t)-\nabla\log p(\mbx_t,t|\mbx_0)||\right]^2\dd t\\
		&R(T)=\frac{1}{2}\int\limits_{t=0}^{T}\E_{\sim\eqref{eq:diffusion_sde}}\left[g^2(t)\|\gradient\log p(\mbx,t\g\mbx_0)\|\right]^2-2\mbf^\top(\mbx,t)\gradient\log p(\mbx,t\g\mbx_0)\dd t,
	\end{flalign*}
	we finally conclude the proof that
	\begin{equation}
		\mathbb{E}_{\sim\eqref{eq:diffusion_sde}}[\log p(\mbx_T,T)]-K(T)+R(T)=\mathbb{E}_{\mbx\sim \pdata}[\log \pdata(\mbx)].
	\end{equation}
\end{proof}
\section{Proof of \cref{prop:kl_vanish}}
In this Section we prove the validity of \cref{prop:kl_vanish} for the case of Variance Preserving (VP) and Variance Exploding (VE) \gls{SDE}s. Remember, as reported also in main \cref{tab:diff_types}, that the above mentioned classes correspond to  $\alpha(t)=-\frac{1}{2}\beta(t),g(t)=\sqrt{\beta(t)},\beta(t)=\beta_0+(\beta_1-\beta_0)t$ and $\alpha(t)=0,g(t)=\sqrt{\frac{d\sigma^2(t)}{\dd t}},\sigma^2(t)=  \left(\frac{\sigma_{max}}{\sigma_{min}}\right)^t$ respectively.

\propklvanish*

\subsection{The variance Preserving (VP) convergence}
We associate this class of \gls{SDE}s to the Fokker Planck operator
\begin{equation}
	\mathcal{L}^\dagger(t)=\frac{1}{2}\beta(t)\nabla^\top\left(\mbx\cdot+\nabla(\cdot)\right),
\end{equation}
and consequently $\frac{d p(\mbx,t)}{\dd t}=\mathcal{L}^\dagger(t)p(\mbx,t)$.
Simple calculations show that $\lim\limits_{T\rightarrow\infty}p(\mbx,T)=\Gauss{1}(\mbx)$.

We compute bound the time derivative of the \gls{KL} term as
\begin{flalign}\label{derivkl1}
	&\frac{d}{\dd t}\KL{p(\mbx,T)}{\Gauss{1}(\mbx)}=\int \frac{d p(\mbx,t)}{\dd t} \log(\frac{p(\mbx,t)}{\Gauss{1}(\mbx)})\dd\mbx+\int \frac{p(\mbx,t)}{p(\mbx,t)}\frac{d p(\mbx,t)}{\dd t}\dd\mbx=\nonumber\\&
	\frac{1}{2}\beta(t)\int\nabla^\top\left(-\nabla\log(\Gauss{1}(\mbx))p(\mbx,t))+\nabla p(\mbx,t))\right) \log(\frac{p(\mbx,t)}{\Gauss{1}(\mbx)})\dd\mbx=\nonumber\\
	&-\frac{1}{2}\beta(t)\int p(\mbx,t)\left(-\nabla\log(\Gauss{1}(\mbx))+\nabla\log p(\mbx,t))\right)^\top \nabla(\log(\frac{p(\mbx,t)}{\Gauss{1}(\mbx)}))\dd\mbx=\nonumber\\&
	-\frac{1}{2}\beta(t)\int p(\mbx,t)\nabla(\log(\frac{p(\mbx,t)}{\Gauss{1}(\mbx)}))^\top \nabla(\log(\frac{p(\mbx,t)}{\Gauss{1}(\mbx)}))\dd\mbx=-\frac{1}{2}\beta(t)\int p(\mbx,t)||\nabla(\log(\frac{p(\mbx,t)}{\Gauss{1}(\mbx)}))||^2\dd\mbx\nonumber\\&
	\leq -\beta(t) \KL{p(\mbx,T)}{\Gauss{1}(\mbx)}.
\end{flalign}
We then apply Gronwall's inequality \citep{villani2009optimal} to $\frac{d}{\dd t}\KL{p(\mbx,T)}{\Gauss{1}(\mbx)}\leq -\beta(t) \KL{p(\mbx,T)}{\Gauss{1}(\mbx)}$ to claim
\begin{equation}
	\KL{p(\mbx,T)}{\Gauss{1}(\mbx)}\leq \KL{p(\mbx,0)}{\Gauss{1}(\mbx)}\exp(-\int_0^T \beta(s)ds).
\end{equation}
\subsection{The Variance Exploding (VE) convergence}

The first step is to bound the derivative w.r.t to $\omega$ of the  divergence $\KL{p_\omega(\mbx)}{\Gauss{\omega}(\mbx)}$, i.e.
\begin{flalign}\label{derivkl2}
	&\frac{d}{d\omega}\KL{p_\omega(\mbx)}{\Gauss{\omega}(\mbx)}=\int \frac{d p_\omega(\mbx)}{d\omega} \log(\frac{p_\omega(\mbx)}{\Gauss{\omega}(\mbx)})\dd\mbx+\int \frac{p_\omega(\mbx)}{p_\omega(\mbx)}\frac{d p_\omega(\mbx)}{d\omega}\dd\mbx-\int\frac{p_\omega(\mbx)}{\Gauss{\omega}(\mbx)}\frac{d \Gauss{\omega}(\mbx)}{d\omega}\dd\mbx=\nonumber\\&
	\frac{1}{2}\int \left(\Delta p_\omega(\mbx)\right)\log(\frac{p_\omega(\mbx)}{\Gauss{\omega}(\mbx)})-\left(\Delta \Gauss{\omega}(\mbx)\right) \frac{p_\omega(\mbx)}{\Gauss{\omega}(\mbx)}\dd\mbx=\nonumber\\&
	\frac{1}{2}\int \nabla^\top\left(p_\omega(\mbx)\nabla\log p_\omega(\mbx)\right)\log(\frac{p_\omega(\mbx)}{\Gauss{\omega}(\mbx)})-\nabla^\top\left(\Gauss{\omega}(\mbx)\nabla\log \Gauss{\omega}(\mbx)\right) \frac{p_\omega(\mbx)}{\Gauss{\omega}(\mbx)}\dd\mbx=\nonumber\\&
	-\frac{1}{2}\int \left(p_\omega(\mbx)\nabla\log p_\omega(\mbx)\right)^\top\nabla(\log(\frac{p_\omega(\mbx)}{\Gauss{\omega}(\mbx)}))-\left(\Gauss{\omega}(\mbx)\nabla\log \Gauss{\omega}(\mbx)\right)^\top\nabla( \frac{p_\omega(\mbx)}{\Gauss{\omega}(\mbx)})\dd\mbx=\nonumber
	\\&
	-\frac{1}{2}\int \left(p_\omega(\mbx)\nabla\log p_\omega(\mbx)\right)^\top\nabla(\log(\frac{p_\omega(\mbx)}{\Gauss{\omega}(\mbx)}))-\left(p_\omega(\mbx)\nabla\log \Gauss{\omega}(\mbx)\right)^\top\nabla(\log( \frac{p_\omega(\mbx)}{\Gauss{\omega}(\mbx)}))\dd\mbx=
	\nonumber\\&-\frac{1}{2}\int p_\omega(\mbx)||\nabla(\log( \frac{p_\omega(\mbx)}{\Gauss{\omega}(\mbx)})) ||^2 \dd\mbx \leq -\frac{1}{\omega}\KL{p_\omega(\mbx)}{\Gauss{\omega}(\mbx)}.
\end{flalign}
Consequently, using again Gronwall inequality, for all $\omega_1>\omega_0>0$ we have
\begin{flalign*}
	&\KL{p_{\omega_1}(\mbx)}{\Gauss{\omega_1}(\mbx)}\leq \KL{p_{\omega_0}(\mbx)}{\Gauss{\omega_0}(\mbx)}\exp(-(\log(\omega_1)-\log(\omega_0)))=\\
	&\KL{p_{\omega_0}(\mbx)}{\Gauss{\omega_0}(\mbx)}\omega_0\frac{1}{\omega_1}.
\end{flalign*}
This can be directly applied to obtain the bound for VE \gls{SDE}. Consider $\omega_1=\sigma^2(T)-\sigma^2(0)$ and $\omega_0=\sigma^2(\tau)-\sigma^2(0)$ for an arbitrarily small $\tau<T$. Then, since for the considered class of variance exploding \gls{SDE} we have $p(\mbx,T)=p_{\sigma^2(T)-\sigma^2(0)}(\mbx)$
\begin{equation}
	\KL{p(\mbx,T)}{ \Gauss{\sigma^2(T)-\sigma^2(0)}(\mbx)}\leq C\frac{1}{\sigma^2(T)-\sigma^2(0)}
\end{equation}
where $C=\KL{p(\mbx,\tau)}{\Gauss{\sigma^2(\tau)-\sigma^2(0)}(\mbx)}$

\section{Proof of \cref{prop:score_error}}
\propscoreerror*

\begin{proof}
	For $\mbtheta_1$ defined as in the lemma, $I(\mbs_{\mbtheta_1},T_1)=K(T_1)+\cG(\mbs_{\mbtheta_1},T_1)$.
	Next, select $T_2>T_1$. Then, for a generic $\mbtheta$, including $\mbtheta_2$,
	\begin{flalign*}
		& I(\mbs_{\mbtheta},T_2) =
		\underbrace{\int\limits_{t=0}^{T_1}g^2(t)\mathbb{E}_{\sim\eqref{eq:diffusion_sde}}\left[||\mbs_{\mbtheta}(\mbx_t,t)-\gradient\log p(\mbx_t,t|\mbx_0)||^2\right]\dd t}_{= I(\mbs_{\mbtheta},T_1) \geq K(T_1)+\cG(\mbs_{\mbtheta_1},T_1) = I(\mbs_{\mbtheta_1},T_1)}+\\&\underbrace{\int\limits_{t=T_1}^{T_2}g^2(t)\mathbb{E}_{\sim\eqref{eq:diffusion_sde}}\left[||\mbs_{\mbtheta}(\mbx_t,t)-\gradient\log p(\mbx_t,t|\mbx_0)||^2\right]\dd t}_{\geq\int\limits_{t=T_1}^{T_2}g^2(t)\mathbb{E}_{\sim\eqref{eq:diffusion_sde}}\left[||\gradient\log p(\mbx_t,t)-\gradient\log p(\mbx_t,t|\mbx_0)||^2\right]\dd t= K(T_2)-K(T_1)}\geq
		\cG(\mbs_{\mbtheta_1},T_1) + K(T_2),
	\end{flalign*}
	from which $\cG(\mbs_{\mbtheta},T_2)=I(\mbs_{\mbtheta},T_2)-K(T_2)\geq\cG(\mbs_{\mbtheta_1},T_1)$.
\end{proof}

\section{Proof of \cref{theo_tstar}}
\theotstar*

\begin{proof}
	It is trivial to verify that
	since the optimal gap term $\cG(\mbs_\mbtheta, T)$ is an increasing function in $T$ \cref{prop:score_error}, then $\pdv{\cG}{T}\geq 0$.Then, we study the sign of the \gls{KL} derivative, which is always negative as shown by
	\cref{derivkl1} and \cref{derivkl2} (where we also notice $\frac{\dd}{\dd t}=\frac{\dd \omega}{\dd t}\frac{\dd}{\dd \omega}$ keep the sign). Moreover, we know that that $\lim\limits_{T\rightarrow\infty} \pdv{\textsc{kl}}{T}=0$.
	Then, the function $\pdv{\Lelbo}{T}=\pdv{\cG}{T}+\pdv{\textsc{kl}}{T}$ has at least one zero in $[0, \infty]$.
	Stricter bounding of the position of the smallest zero would require a study of the growth rates of $\cG$ and the \textsc{kl} terms for large $T$, but this is outside the scope of this paper.
\end{proof}

\section{Proof of \cref{prop_tstarstar}}

\proptstarstar*

\begin{proof}
	Since $\forall T$ we have $\Lelbo^{\mbphi}(\mbs_\mbtheta, T)\geq \Lelbo(\mbs_\mbtheta, T)$, there exists a countable set of intervals $\mathcal{I}$ contained in $[0,T^\star]$ of variable supports, where $\Lelbo^{\mbphi}$ is greater than $\Lelbo(\mbs_\mbtheta, T)$. Assuming continuity of $\Lelbo^{\mbphi}$, in these intervals is possible to find at least one $\tau\leq T^\star$ where $\Lelbo^{\mbphi^\star}(\widehat\mbs_\mbtheta, \tau) \geq \Lelbo(\widehat\mbs_\mbtheta, T^{*})$.
\end{proof}
We notice that the degenerate case $\mathcal{I}=T^\star$ is obtained only when $\forall T\leq T^\star,	\KL{p(\mbx,T)}{\nu_{\mbphi^*}(\mbx)} =
	\KL{p(\mbx,T)}{\ps(\mbx)}$. We expect this condition to never occur in practice.

\section{Experimental details}
We here give some additional details concerning the experimental (\cref{sec:experiments}) settings.

\subsection{Toy example details}
In the toy example, we use $8192$ samples from a simple Gaussian mixture with two components as target $\pdata(\mbx)$.
In detail, we have $\pdata(\mbx)=\pi\cN(1, 0.1^2) + (1-\pi)\cN(3, 0.5^2)$, with $\pi=0.3$.
The choice of Gaussian mixture allows to write down explicitly the time-varying density
\begin{align}
	p(\mbx_t, t) = \pi \cN(1, s^2(t) + 0.1^2) + (1-\pi) \cN(3, s^2(t) + 0.5^2),
\end{align}
where $s^2(t)$ is the marginal variance of the process at time $t$.
We consider a variance exploding \gls{SDE} of the type $\dd\mbx_t = \sigma^t\dd\mbw_t$, which corresponds to $s^2(t) = \frac{\sigma^{2t}-1}{2\log\sigma}$.

\begin{figure}[t]
	\centering
	\includegraphics[width=\textwidth]{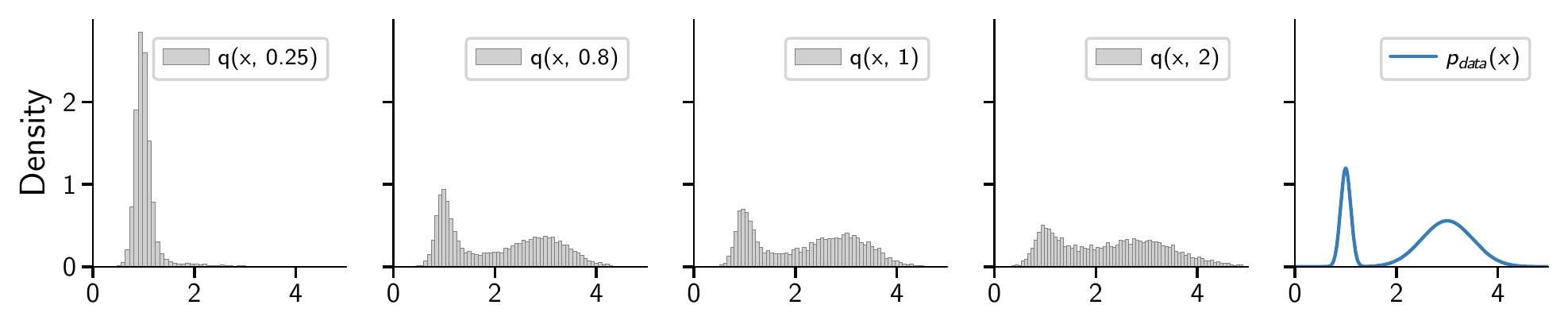}
	\caption{Visualization of few samples at different diffusion times $T$.}
\end{figure}

\subsection{\cref{sec:experiments} details}
We considered Variance Preserving \gls{SDE} with default $\beta_0,\beta_1$ parameter settings. When experimenting on \cifar we considered the \textsc{NCSN++} architecture as implemented in \cite{song2020score}. Training of the score matching network has been carried out with the default set of optimizers and schedulers of \cite{song2020score}, independently of the selected $T$.

For the \mnist dataset we reduced the architecture by considering 64 features, $\text{ch\_mult}=(1,2)$ and attention resolutions equal to 8. The optimizer has been selected as the one in the \cifar experiment but the warmup has been reduced to 1000 and the total number of iterations to 65000.
\subsection{Varying $T$}
We clarify about the $T$ truncation procedure during both training and testing. The \gls{SDE} parameters are kept unchanged irrespective of $T$. During training, as evident from \cref{eq:score_matching}, it is sufficient to sample randomly the diffusion time from distribution $\mathcal{U}(0,T)$ where $T$ can take any positive value. For testing (sampling) we simply modified the algorithmic routines to begin the reverse diffusion processes from a generic $T$ instead of the default $1.0$.

\section{Non curated samples}
We provide for completeness collection of non curated samples for the \cifar \cref{fig:cifar0.2,fig:cifar0.4,fig:cifar0.6,fig:cifar1.0} and \mnist dataset \cref{fig:mnist0.2,fig:mnist0.4,fig:mnist0.6,fig:mnist1.0}

\begin{figure}
	\centering
	\includegraphics[scale=0.5]{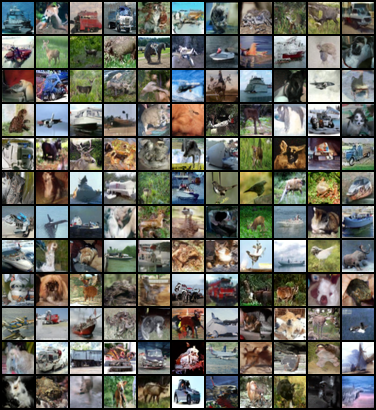}
	\includegraphics[scale=0.5]{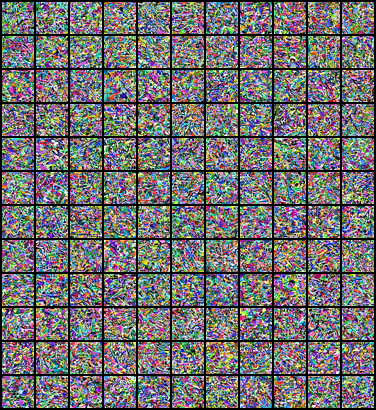}
	\caption{\cifar:Our(left) and Vanilla(right) method at $T=0.2$}
	\label{fig:cifar0.2}
\end{figure}
\begin{figure}
	\centering
	\includegraphics[scale=0.5]{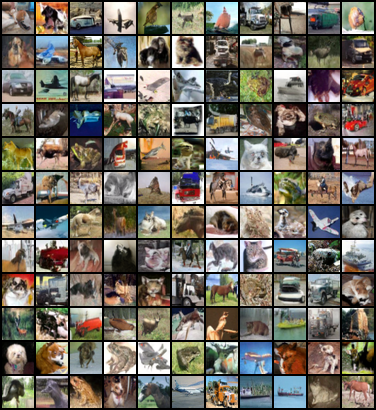}
	\includegraphics[scale=0.5]{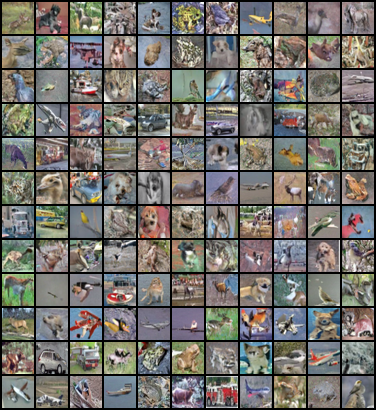}
	\caption{\cifar:Our(left) and Vanilla(right) method at $T=0.4$}
	\label{fig:cifar0.4}
\end{figure}
\begin{figure}
	\centering
	\includegraphics[scale=0.5]{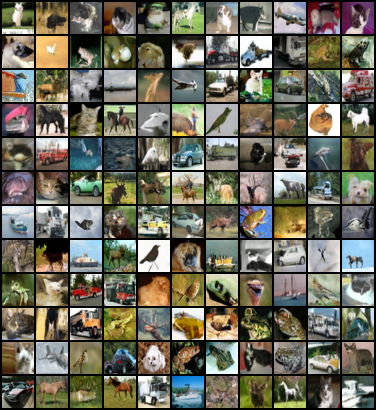}
	\includegraphics[scale=0.5]{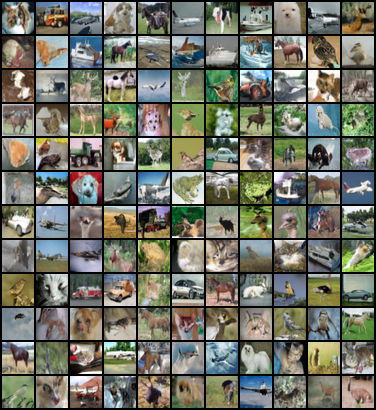}
	\caption{\cifar:Our(left) and Vanilla(right) method at $T=0.6$}
	\label{fig:cifar0.6}
\end{figure}

\begin{figure}
	\centering
	\includegraphics[scale=0.5]{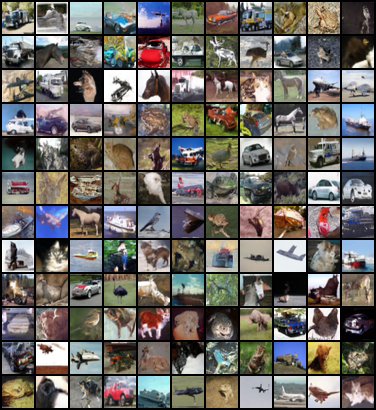}
	\caption{Vanilla method at $T=1.0$}
	\label{fig:cifar1.0}
\end{figure}

\begin{figure}
	\centering
	\includegraphics[scale=0.5]{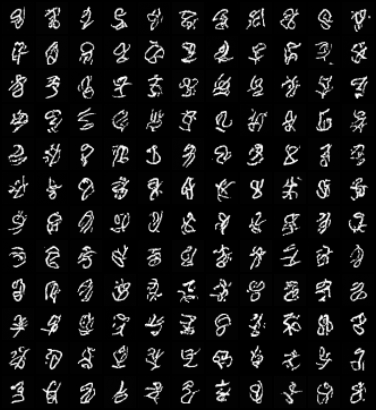}
	\includegraphics[scale=0.5]{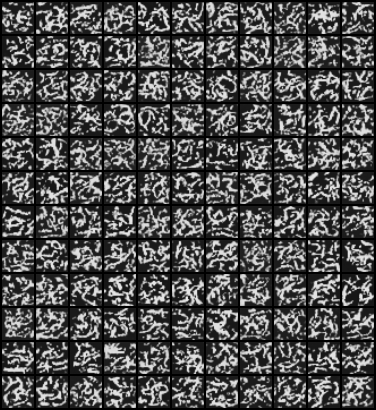}
	\caption{\mnist:Our(left) and Vanilla(right) method at $T=0.2$}
	\label{fig:mnist0.2}
\end{figure}
\begin{figure}
	\centering
	\includegraphics[scale=0.5]{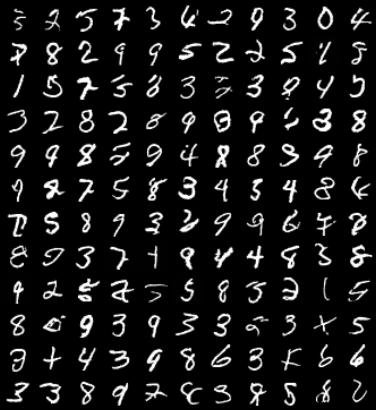}
	\includegraphics[scale=0.5]{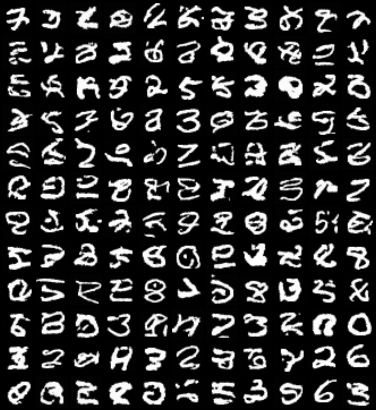}
	\caption{\mnist:Our(left) and Vanilla(right) method at $T=0.4$}
	\label{fig:mnist0.4}
\end{figure}
\begin{figure}
	\centering
	\includegraphics[scale=0.5]{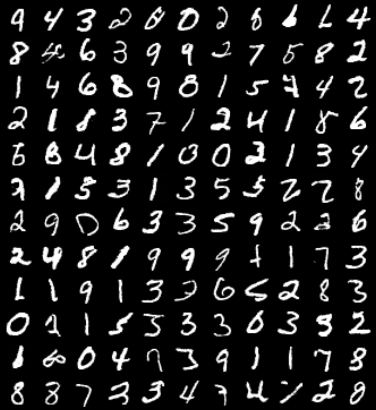}
	\includegraphics[scale=0.5]{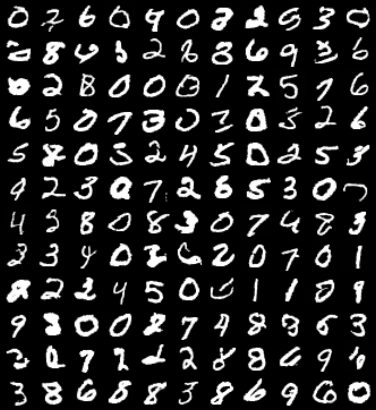}
	\caption{\mnist:Our(left) and Vanilla(right) method at $T=0.6$}
	\label{fig:mnist0.6}
\end{figure}

\begin{figure}
	\centering
	\includegraphics[scale=0.5]{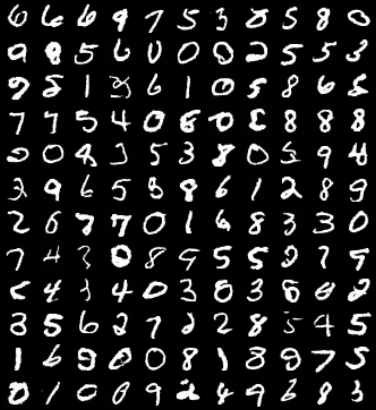}
	\caption{\mnist: Vanilla method at $T=1.0$}
	\label{fig:mnist1.0}
\end{figure}

\end{document}